\documentclass[11pt]{article}

\usepackage[final]{acl}

\usepackage{times}
\usepackage{latexsym}
\usepackage{bbm}
\usepackage{amsthm}
\usepackage{amsmath}
\usepackage{amssymb}

\usepackage{algorithm}   
\usepackage{algpseudocode}
\usepackage{graphicx}
\usepackage{subcaption} 
\newtheorem{theorem}{Theorem}[section]    

\usepackage{multirow}
\usepackage{booktabs}
\algnewcommand\algorithmicbreak{\textbf{break}}
\algnewcommand\algorithmiccontinue{\textbf{continue}}
\algnewcommand\Break{\State \algorithmicbreak}
\algnewcommand\Continue{\State \algorithmiccontinue}
\algnewcommand{\To}{\textbf{to}}

\newenvironment{manualtheorem}[1]{
  \renewcommand{\thetheorem}{#1}
  \theorem 
}{
  \endtheorem 
}
\usepackage[T1]{fontenc}

\usepackage[utf8]{inputenc}

\usepackage{microtype}

\usepackage{inconsolata}

\usepackage{graphicx}

\title{Quantifying and Understanding Uncertainty in Large Reasoning Models}

\author{Yangyi Li\thanks{Equal contribution.}, Chenxu Zhao\footnotemark[1], Mengdi Huai \\
  Department of Computer Science,
  Iowa State University\\
  \texttt{\{liyangyi, cxzhao, mdhuai\}@iastate.edu} \\}

\begin{document}
\maketitle
\begin{abstract}








Large Reasoning Models (LRMs) have recently demonstrated significant improvements in complex reasoning. While quantifying generation uncertainty in LRMs is crucial, traditional methods are often insufficient because they do not provide finite-sample guarantees for reasoning-answer generation. Conformal prediction (CP) stands out as a distribution-free and model-agnostic methodology that constructs statistically rigorous uncertainty sets. However, existing CP methods ignore the logical connection between the reasoning trace and the final answer. Additionally, prior studies fail to interpret the origins of uncertainty coverage for LRMs as they typically overlook the specific training factors driving valid reasoning. Notably, it is challenging to disentangle reasoning quality from answer correctness when quantifying uncertainty, while simultaneously establishing theoretical guarantees for computationally efficient explanation methods. To address these challenges, we first propose a novel methodology that quantifies uncertainty in the reasoning-answer structure with statistical guarantees. Subsequently, we develop a unified example-to-step explanation framework using Shapley values that identifies a provably sufficient subset of training examples and their key reasoning steps to preserve the guarantees. We also provide theoretical analyses of our proposed methods. Extensive experiments on challenging reasoning datasets verify the effectiveness of the proposed methods.


\end{abstract}    
\section{Introduction}
\label{sec:intro}
\vspace{-1.5mm}
Recent advancements have further evolved toward Large Reasoning Models (LRMs), exemplified by the OpenAI o-series~\cite{openai2024learning} and DeepSeek-R1 series~\cite{guo2025deepseek}, capable of tackling complex tasks~\cite{shen-etal-2025-dast}. Unlike standard LLMs that directly map inputs to outputs, LRMs distinguish themselves by generating an explicit reasoning trace prior to the final answer. By scaling inference-time computations, these models acquire the capability to explore diverse solution paths, reflect on potential errors, and refine intermediate steps. This rigorous process allows LRMs to decompose intricate problems in challenging domains such as mathematics and coding, justifying their conclusions with logical derivations that reflect human problem-solving.

Ensuring the reliable deployment of such models in real-world applications requires accurate uncertainty quantification. However, conventional approaches typically lack rigorous statistical guarantees~\cite{tian-etal-2023-just}, limiting their trustworthiness in critical scenarios. While Conformal Prediction (CP)~\cite{vovk2005algorithmic,shafer2008tutorial} offers a distribution-free and model-agnostic framework to address this, traditional CP methods are insufficient for LRMs. Existing approaches either calibrate the final answer while ignoring the reasoning trace~\cite{ye2024benchmarking}, or treat the generation as a whole without verifying whether the reasoning logically supports the final answer~\cite{quach2024conformal}. Consequently, by failing to account for the intrinsic reasoning-answer structure of LRMs, these methods cannot guarantee that a correct answer is supported by a valid reasoning trace. This limitation necessitates the design of a valid uncertainty quantification framework explicitly tailored to this special structure.

Beyond establishing valid uncertainty sets, interpreting the origins of quantified uncertainty is equally critical. For LRMs, identifying which training examples and steps are sufficient to guarantee uncertainty coverage provides essential insights into the reliability of the reasoning trace and guides further model refinement. However, existing literature largely overlooks the connection between uncertainty quantification and its explanation. Standard explanation methods~\cite{selvaraju2017grad} typically focus on label prediction and lack the mechanisms to trace the validity of uncertainty sets back to the training stage. Moreover, they often treat training samples as single units, failing to pinpoint which reasoning steps within an example are critical for ensuring that the model generates valid reasoning-answer pairs. Consequently, even when uncertainty is rigorously guaranteed, the training factors behind this reliability remain opaque.

However, addressing these limitations presents several unique challenges. First, formulating a rigorous uncertainty guarantee for valid reasoning-answer pairs is inherently difficult. Specifically, it requires disentangling the quality of the reasoning trace from the correctness of the final answer while simultaneously capturing their interdependence within a rigorous statistical framework. Second, attributing valid prediction uncertainty to specific training data faces significant computational hurdles. This complexity escalates dramatically when extending attribution to the fine-grained level of reasoning steps, as rigorous attribution typically necessitates evaluating model performance across an exponentially growing number of data subsets. While approximation methods are required to mitigate these costs, establishing rigorous theoretical guarantees for such approximations presents a critical challenge. Specifically, it is challenging to certify that the extracted data subset identified through approximation is sufficient to preserve the statistical risk control of the original model.

To address the aforementioned challenges, in this paper, we first propose \textbf{Co}nformal \textbf{R}easoning-\textbf{A}nswer \textbf{P}rediction (\textbf{CoRAP}), a novel framework designed to jointly quantify uncertainty of the reasoning-answer structure. By defining distinct quality functions to evaluate logical interdependence of reasoning-answer pairs and employing a rigorous statistical calibration procedure, CoRAP constructs uncertainty sets with finite-sample guarantees, ensuring that the set contains a generated sequence in which the true answer is supported by a valid reasoning trace with a user-specified probability. In addition, we propose a unified example-to-step explanation framework using Shapley values to identify the most influential training factors contributing to the uncertainty coverage. To resolve computational intractability, we employ a hierarchical Monte Carlo approximation that first identifies pivotal training examples and subsequently isolates critical reasoning steps. Crucially, our framework extracts an example or step subset that is provably sufficient to ensure the uncertainty guarantees. Furthermore, we provide detailed theoretical analysis to certify the validity of our proposed uncertainty quantification and explanation methods. Extensive experiments across complex reasoning tasks demonstrate the effectiveness of our approach in ensuring theoretically guaranteed and interpretable uncertainty quantification.
\vspace{-1mm}
\section{Related Work}
\label{sec:related_work}
\vspace{-1.5mm}
Large Reasoning Models (LRMs) represent a significant evolution by generating explicit reasoning traces to decompose intricate problems~\cite{an2025dont}. Leveraging this capability, they play a pivotal role in various applications, including mathematical reasoning~\cite{zhang-2025-confidence,minegishi2025topology}, web search~\cite{li2025webthinker}, and code generation~\cite{li2025teaching}. While uncertainty quantification is crucial for ensuring model trustworthiness~\cite{qian2025towards_cikm}, existing work developed for LLMs remains inadequate for LRMs~\cite{han-etal-2025-attributes}. Heuristic approaches~\cite{zhang-zhang-2025-cot,li-huai-2025-quantifying} relying on token-level probabilities or verbalized confidence scores, fundamentally lack rigorous statistical guarantees. Conversely, frameworks employing CP offer finite-sample coverage guarantees~\cite{li2024data,wang2025sample} but fail to account for the intrinsic reasoning-answer structure of LRMs. Specifically, existing approaches either calibrate the final answer while ignoring the intermediate reasoning trace~\cite{ye2024benchmarking}, or treat the generation process as a whole without verifying whether the reasoning logically supports the final answer~\cite{quach2024conformal,su2025cp}. Consequently, these methods often ensure coverage for the output but disregard the logical validity of the reasoning.

Moreover, explaining the sources of uncertainty is equally critical for transparency. Traditional approaches~\cite{watson2023explaining,slack2021reliable,qian2024towards,chen2024modeling,li2026uncertainty} attribute uncertainty to static input features but are intractable for LRMs due to the prohibitive cost of long-sequence sampling and the complexity of dynamic reasoning. While natural language explanations are widely used to interpret LLM predictions~\cite{kumar-talukdar-2020-nile}, they often fail to account for uncertainty. To address this, \cite{liu-etal-2025-abstain} attempts to categorize uncertainty sources to generate inquiries. However, this heuristic approach lacks both the step-level granularity essential for complex reasoning and rigorous statistical guarantees for explanation validity. In this work, we address these limitations by proposing a framework that jointly quantifies the uncertainty of the reasoning-answer structure with finite-sample guarantees. To identify influential factors, we introduce a unified example-to-step explanation method based on Shapley values. We resolve computational intractability via a hierarchical Monte Carlo approximation that efficiently isolates pivotal training examples and reasoning steps. Crucially, our framework extracts a subset provably sufficient to maintain the established uncertainty guarantees.


\vspace{-1mm}
\section{Methodology}
\vspace{-1.5mm}
\label{sec:method}
In this section, we introduce our proposed framework designed to quantify and explain uncertainty in LRMs. While this task is critical, existing methods fail to disentangle reasoning traces from answers to guarantee the validity of reasoning-answer pair uncertainty, and they lack computationally feasible mechanisms to attribute uncertainty to specific data sources with guarantees. To address these challenges, we first propose a novel method that constructs statistically valid uncertainty sets for reasoning-answer pairs. Subsequently, we describe our example-to-step explanation framework, which utilizes Shapley values to identify the specific training examples and reasoning steps responsible for ensuring valid uncertainty quantification.

Formally, we consider a supervised reasoning task using a training set $\mathcal{D}_{\mathrm{tr}}=\{z_j = (x_j,q_j,r_j,y_j)\}_{j=1}^{n_{\mathrm{tr}}}$, where each instance consists of an input image $x_j$, a query $q_j$, a reasoning trace $r_j$ including steps, and a final answer $y_j$. We denote a complete answer as $a = r \Vert y$. An LRM parameterizes a generation policy $\pi_\theta$ that models the joint probability of reasoning and answer tokens conditioned on the input. We use $\hat{a}$ to denote an answer generated by the model.

\vspace{-1mm}
\subsection{Conformal Reasoning-Answer Prediction}
\vspace{-1.5mm}
\label{sec:cra}

Here, we propose a post-hoc and model-agnostic method that assigns provably valid uncertainty to the reasoning-answer structure given a calibration set $\mathcal{D}_{\mathrm{cal}} = \{z_i = (x_i, q_i, r_i, y_i)\}_{i=1}^{n_\mathrm{cal}}$. The challenge is to disentangle the reasoning trace $r$ from the final answer $y$, and to guarantee that a correct answer is supported by a valid reasoning process.

To address this challenge, we first define distinct quality functions designed to disentangle the evaluation of reasoning and answers. Specifically, the plausibility of a full response is measured by the sequence quality function
$Q(x_i,q_i,\hat{a}_i)=\frac{1}{|\hat{a}_i|}\log p_\theta(\hat{a}_i | x_i,q_i)$.
We define the set confidence function as
$F(\mathcal{C})=\max_{a\in\mathcal{C}} Q(x_i,q_i,a)$. Furthermore, to validate the interdependence between the reasoning trace and the final answer, a conditional answer quality function $A(x_i, q_i, \hat{a}_i)
= p_\theta(\hat{y}_i | x_i,q_i,\hat{r}_i)$ is employed to evaluate the model's confidence in $\hat{y}_i$ conditioned on $\hat{r}_i$. Intuitively, the sequence quality function $Q$ filters out low-quality individual sequences, the set confidence function $F$ ensures at least one candidate in the set is high-confidence, and the answer quality function $A$ acts as a consistency check to ensure the answer is grounded in the reasoning trace.

We employ a sampling-based procedure to construct the prediction set by iteratively expanding a candidate pool. At each step $k$, we sample a sequence $\hat{a}_k$ from the model and include it in the output set $\mathcal{C}$ only if its sequence quality $Q$ exceeds the threshold $\lambda_1$. This accumulation continues until the step $K \le K_{\mathrm{max}}$, at which point two termination criteria are simultaneously met: (i) the set confidence $F(\mathcal{C}_{ K}) \ge \lambda_2$, and (ii) the answer quality $A( x_i, q_i,a^*) \ge \lambda_3$ for the highest-scoring candidate $a^*$ in the set $\mathcal{C}_{ K}$. Considering a finite grid $\Lambda$ of threshold tuples $\lambda=(\lambda_1,\lambda_2,\lambda_3)$, the final output set is denoted as $\mathcal{C}^{\mathrm{RA}}_{\lambda}(x_i, q_i;\theta)$. 



To assess the validity of the reasoning-answer pair uncertainty with finite-sample guarantees, we adopt a specialized loss function within a Learn Then Test~\cite{angelopoulos2025learn} framework to identify the specific threshold configuration $\lambda$ that ensures uncertainty guarantees on unseen data. The core idea is to define a failure event for any given instance. We now calibrate which threshold configuration to use so that the probability of such failures is controlled on test data. Specifically, the loss function is formulated to explicitly capture the failure of generating a correct answer supported by valid reasoning as follows:
\vspace{-2mm}
\begin{align}
L^{\mathrm{RA}}(z_i;\lambda,\theta)
=&\mathbf{1}\{\nexists\,\hat{a}_k \in \mathcal{C}^{\mathrm{RA}}_{\lambda}(x_i,q_i;\theta):\\
& V(z_i,\hat{r}_k)=1 \text{ and } \hat{y}_k=y_i\}, \notag
\end{align}

\vspace{-3mm}
\noindent where $V(z_i,\hat{r}_k)$ is a binary admission function that returns 1 if the generation reasoning trace is admitted for the input $x_i$, and 0 otherwise. 
Given a model $\theta$, we evaluate every candidate $\lambda\in\Lambda$ on the calibration set by computing its empirical risk. To ensure statistical validity, we convert the estimated risk into a statistical $p$-value and apply family-wise error rate (FWER) control at level $\varepsilon$. The valid set $\Lambda_{\mathrm{valid}}(\theta)$ retains only configurations with sufficiently small FWER-adjusted $p$-values. With probability at least $1-\varepsilon$, the true risk of every retained configuration is bounded by $\alpha$. If this set is empty, the procedure abstains. Otherwise, we select a final $\hat{\lambda}(\theta)$ from $\Lambda_{\mathrm{valid}}(\theta)$ that minimizes the expected set size. 


\begin{theorem}
\label{thm:CoRAP_ltt}
Assume calibration examples in $\mathcal D_{\mathrm{cal}}$ and a test sample $z_t$ are i.i.d., and CoRAP's sampling randomness is independent across examples.
Let $\Lambda_{\mathrm{valid}}(\theta)$ be obtained by applying an FWER-$\varepsilon$ procedure to the $p$-values
for target level $\alpha\in(0,1)$ on model $\theta$.
Then, for any
$\hat\lambda(\theta)\in\Lambda_{\mathrm{valid}}(\theta)$, we have the risk constraint
\vspace{-1mm}
\begin{align}
\label{eq:risk_control}
\mathbb E[L^{\mathrm{RA}}(z_t;\hat\lambda(\theta),\theta)] \le \alpha,
\end{align}

\vspace{-1mm}
\noindent with probability at least $1-\varepsilon$ over $\mathcal D_{\mathrm{cal}}$.
\end{theorem}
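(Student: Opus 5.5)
The argument is the standard Learn Then Test reduction: the calibration-set losses at a fixed threshold configuration are i.i.d. Bernoulli variables. Valid $p$-values then give a family of valid tests, and FWER control converts these into a simultaneous guarantee that transfers to any selected $\hat\lambda(\theta)$. Throughout, the model $\theta$ is held fixed, since it is trained on $\mathcal D_{\mathrm{tr}}$ and is independent of $\mathcal D_{\mathrm{cal}}$ and $z_t$.

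\textbf{Step 1 (true risk of a fixed configuration).} For each fixed $\lambda\in\Lambda$, define the true risk $R(\lambda)=\mathbb E[L^{\mathrm{RA}}(z;\lambda,\theta)]$, where the expectation is over a fresh sample $z$ and CoRAP's sampling randomness. The loss $L^{\mathrm{RA}}(z_i;\lambda,\theta)$ takes values in $\{0,1\}$ and depends only on the pair $z_i$ and the sampling randomness used for example $i$. Because the examples are i.i.d.\ and the sampling is independent across examples, the values $\{L^{\mathrm{RA}}(z_i;\lambda,\theta)\}_{i=1}^{n_{\mathrm{cal}}}$ are i.i.d.\ Bernoulli$(R(\lambda))$. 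The test loss $L^{\mathrm{RA}}(z_t;\lambda,\theta)$ has the same distribution. Note that the stopping rule with $K\le K_{\max}$ and the admission function $V$ enter only through this Bernoulli indicator, so their details play no further role.

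\textbf{Step 2 (valid $p$-values).} For each $\lambda$, consider the null hypothesis $H_\lambda: R(\lambda)>\alpha$. Take $p_\lambda=\mathbb P\bigl(\mathrm{Bin}(n_{\mathrm{cal}},\alpha)\le n_{\mathrm{cal}}\hat R(\lambda)\bigr)$, the exact binomial tail $p$-value; a Hoeffding--Bentkus $p$-value would also work. The claim is that $\mathbb P(p_\lambda\le u)\le u$ for all $u\in[0,1]$ under $H_\lambda$. This follows because the count $n_{\mathrm{cal}}\hat R(\lambda)\sim\mathrm{Bin}(n_{\mathrm{cal}},R(\lambda))$ is stochastically larger than $\mathrm{Bin}(n_{\mathrm{cal}},\alpha)$ when $R(\lambda)>\alpha$. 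Combined with the super-uniformity of the CDF of a discrete variable evaluated at the variable itself, this gives the claim.

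\textbf{Step 3 (FWER control).} The procedure producing $\Lambda_{\mathrm{valid}}(\theta)$ controls the FWER at level $\varepsilon$ for valid $p$-values; for example, Bonferroni or fixed-sequence testing requires no dependence assumptions among the $p_\lambda$. Hence
$\mathbb P\bigl(\exists\lambda\in\Lambda_{\mathrm{valid}}(\theta): R(\lambda)>\alpha\bigr)\le\varepsilon.$
On the complementary event, which has probability at least $1-\varepsilon$ over $\mathcal D_{\mathrm{cal}}$, every retained configuration satisfies $R(\lambda)\le\alpha$.

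\textbf{Step 4 (transfer to $\hat\lambda(\theta)$).} The chosen $\hat\lambda(\theta)$ lies in $\Lambda_{\mathrm{valid}}(\theta)$, and it may be selected in any data-dependent way, including by minimizing set size. On the good event, therefore, $R(\hat\lambda(\theta))\le\alpha$. Since $z_t$ is independent of $\mathcal D_{\mathrm{cal}}$, we have $\mathbb E[L^{\mathrm{RA}}(z_t;\hat\lambda(\theta),\theta)\mid\mathcal D_{\mathrm{cal}}]=R(\hat\lambda(\theta))$, which yields \eqref{eq:risk_control}. If $\Lambda_{\mathrm{valid}}(\theta)$ is empty, the procedure abstains and the statement holds vacuously.

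The main obstacle is the care needed in Step 4. The expectation in \eqref{eq:risk_control} must be read as conditional on $\mathcal D_{\mathrm{cal}}$, with randomness only over $z_t$ and the test-time sampling. The data-dependent selection is harmless only because the FWER guarantee in Step 3 is simultaneous over all $\lambda\in\Lambda_{\mathrm{valid}}(\theta)$, which is why a union over the finite grid $\Lambda$ suffices. A secondary point, which I would handle at the start of the proof, is making sure the independence of the sampling randomness is stated so that test-time sampling is independent of the calibration draws.
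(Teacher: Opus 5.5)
Your proposal is correct and follows essentially the same route as the paper's proof: a Binomial count for each fixed $\lambda$, super-uniformity of the binomial-tail $p$-value under $H_{0,\lambda}$ via stochastic dominance, FWER control giving a simultaneous guarantee over $\Lambda_{\mathrm{valid}}(\theta)$, and transfer to any data-dependent $\hat\lambda(\theta)$ using independence of $z_t$ from $\mathcal D_{\mathrm{cal}}$. Your remark that the expectation must be read conditionally on $\mathcal D_{\mathrm{cal}}$, with test-time sampling independent of the calibration draws, makes explicit a point the paper leaves implicit.
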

\vspace{-1mm}

In other words, this theorem guarantees that with probability at least $1-\varepsilon$ over the calibration data, the expected loss on $z_t$ is strictly controlled below the user-specified target $\alpha$. Since the loss indicates the failure to retrieve a valid reasoning-answer pair, this risk bound implies that the constructed uncertainty set covers a correct answer supported by valid reasoning with a probability of at least $1-\alpha$. We provide the detailed proof in Appendix~\ref{sec:proof}.

\vspace{-1mm}
\subsection{Example-to-Step Explanations for Conformal Reasoning-Answer Prediction}
\vspace{-1mm}
\label{sec:explanation_framework}
Based on the sets returned by CoRAP, we aim to identify the most influential training examples and reasoning steps that are crucial to satisfy the uncertainty guarantees in the theorem above for a given test instance. The primary challenge lies in the computational intractability of exact attribution methods, which typically require exponential retraining. Moreover, employing approximation methods introduces the critical obstacle of ensuring that the identified subset is statistically sufficient to ensure the rigorous risk control.

To overcome the challenges above, we develop a unified data valuation framework based on Shapley values~\cite{shapley1953value} with guarantees. At the example level, we quantify how each training example in $\mathcal{D}_{\mathrm{tr}}$ contributes to covering $y_{t}$ and select an appropriate subset $S^\star_{\mathrm{ex}}$ that is provably sufficient for coverage at $x_{t}$. At the step level, conditioned on the selected examples $S^\star_{\mathrm{ex}}$, we further attribute contribution to individual reasoning steps inside these examples and extract a step subset $T^\star$ that remains sufficient to achieve coverage. The two levels together produce an example-to-step explanation of CoRAP success with valid guarantees.

\textbf{Formalizing the explanation problem.}
We define the contributing players at two levels. At the example level, let $\mathcal{P}_{\mathrm{ex}}=[n_\mathrm{tr}]$, where each player $j$ corresponds to a training example. At the step level, we define the player set $\mathcal{P}_{\mathrm{st}}$ as the set of all steps from the selected examples $S_{\mathrm{ex}}^\star$.
Next, we define value functions that measure the success of any coalition. Given a coalition of examples $S \subseteq \mathcal{P}_{\mathrm{ex}}$, we train a model $\theta^S$ exclusively on the examples within $S$. Subsequently, we execute the CoRAP procedure at level $\alpha$ using this model to generate the uncertainty set $\mathcal{C}^{\mathrm{RA}}_{\hat{\lambda}}(x_{t},q_{t};\theta^S)$. The example-level value is defined to measure whether a subset $S$ suffices to ensure valid coverage:
\vspace{-1mm}
\begin{align}
\label{eq:vex}
\!\!\!v_{\mathrm{ex}}(S; z_{t}):=1-L^{\mathrm{RA}}(z_t;\hat{\lambda}(\theta^S),\theta^S).
\end{align}

\vspace{-1.5mm}
Conditioned on $S_{\mathrm{ex}}^\star$, we consider per-example step selections. For the step universe $\mathcal{P}_{\mathrm{st}}$ derived from $S_{\mathrm{ex}}^\star$, consider a step coalition $T \subseteq \mathcal{P}_{\mathrm{st}}$. The model $\theta^{T}$ is trained on the restricted steps in $T$, while all other steps are discarded. Running CoRAP under this training regime yields the set $\mathcal{C}^{\mathrm{RA}}_{\hat{\lambda}}(x_{t},q_{t};\theta^{T})$. From this, we define the conditional value to quantify the step-level contribution towards satisfying the guarantee:
\vspace{-1mm}
\begin{align}
\label{eq:vst}
&v_{\mathrm{st}}(T ; z_{t}):=1- L^{\mathrm{RA}}(z_t;\hat{\lambda}(\theta^{T}),\theta^{T}).
\end{align}

\vspace{-1mm}
\noindent We specifically aim to identify the pivotal subset of training examples or steps that is sufficient to ensure the validity of the test sample by value functions. Overall, this hierarchical formulation allows us to first identify the influential examples and subsequently isolate the critical reasoning steps within them, effectively avoiding the computational overhead of searching the entire step universe.

\begin{figure*}[t] 
  \centering
  \begin{subfigure}[t]{0.246\textwidth}
    \centering
    \includegraphics[width=\linewidth]{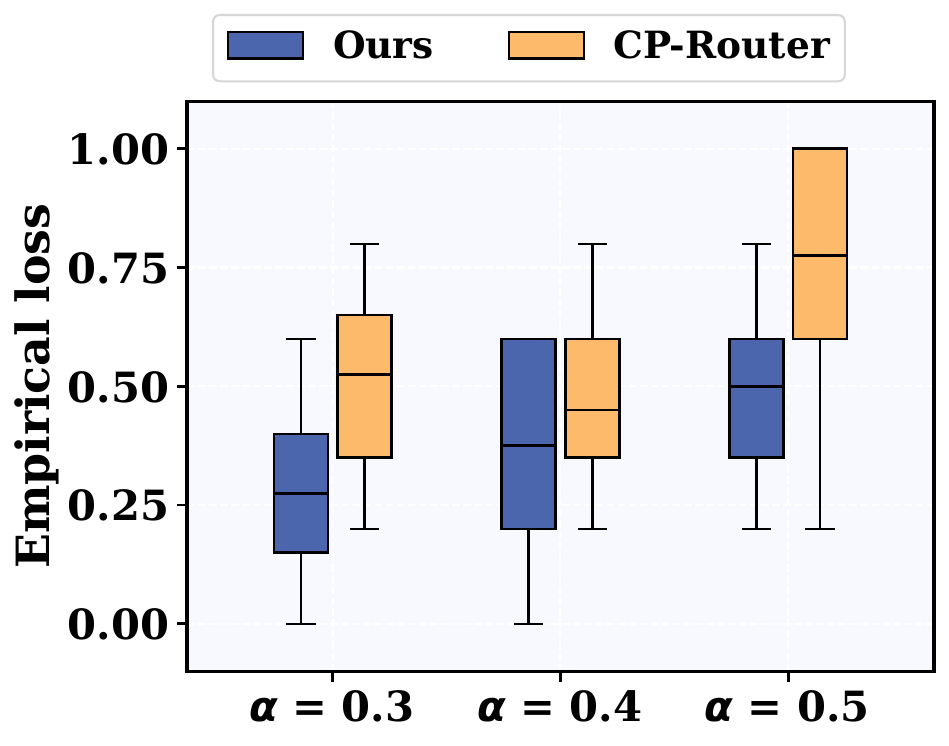}
    \vspace{-6mm}
    \caption{Validity on CLEVR-Math}
    \label{fig:cp_validity_clevr}
  \end{subfigure}\hfill
  \begin{subfigure}[t]{0.246\textwidth}
    \centering
    \includegraphics[width=\linewidth]{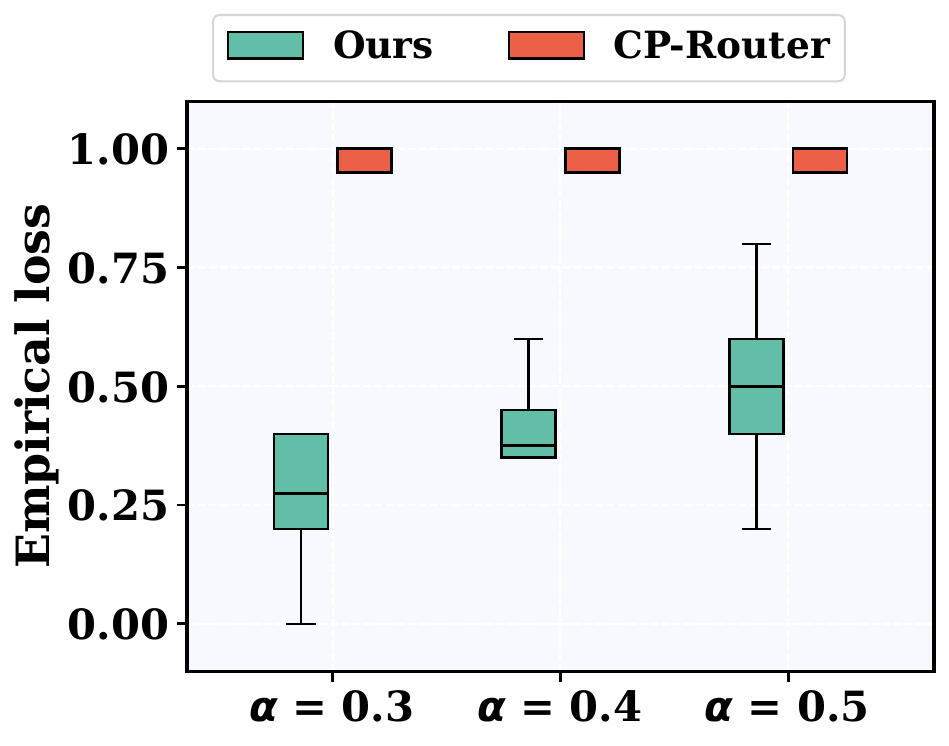}
    \vspace{-6mm}
    \caption{Validity on ScienceQA}
    \label{fig:cp_validity_sqa}
  \end{subfigure}\hfill
    \begin{subfigure}[t]{0.25\textwidth}
    \centering
    \includegraphics[width=0.92\linewidth]{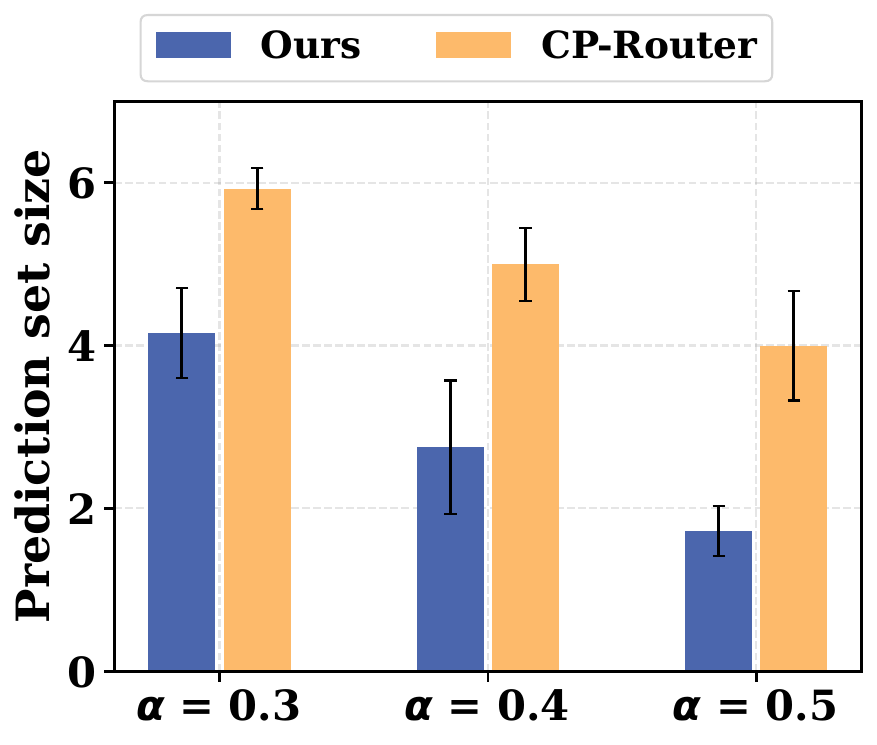}
    \vspace{-1.35mm}
    \caption{Efficiency on CLEVR-Math}
    \label{fig:cp_eff_clevr}
  \end{subfigure}\hfill
  \begin{subfigure}[t]{0.24\textwidth}
    \centering
    \includegraphics[width=\linewidth]{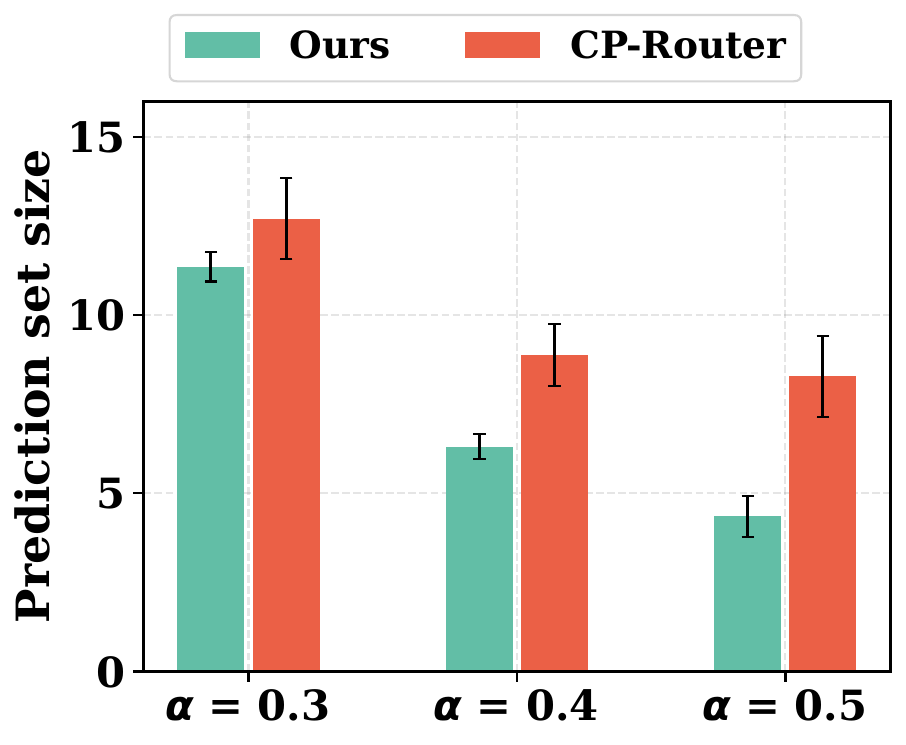}
        \vspace{-6mm}
    \caption{Efficiency on ScienceQA}
    \label{fig:cp_eff_sqa}
  \end{subfigure}

  \vspace{-2mm}
  \caption{Empirical loss and efficiency results across datasets.}
  \label{fig:conformalresults}
  \vspace{-4mm}
\end{figure*}
\textbf{Quantifying contribution and constructing provable explanations.}
To identify a subset sufficient to satisfy the coverage guarantee, we quantify the contribution of each player (an example or a step) using Shapley values. Conceptually, this game-theoretic metric measures a player's importance as its average marginal contribution to the coalition's value across all permutations. Since computing the exact Shapley value requires summing over exponentially many coalitions, it is computationally intractable. To address the computational challenge, we employ a Monte Carlo (MC) approximation. We sample $M$ independent and uniformly random permutations $\tau_1,\ldots,\tau_M$ of $\mathcal{P}$. For each $\tau_m$, let $S_u^{(m)}$ denote the players appearing before $u$ in the sequence. The unbiased estimator for the Shapley value is defined as:
\vspace{-1mm}
\begin{align}
\label{eq:shapley_mc}
\widehat{\phi}_u=\frac{1}{M}\sum_{m=1}^{M}(v(S_u^{(m)}\cup\{u\})-v(S_u^{(m)})).
\end{align}

\vspace{-1mm}
\noindent To quantify statistical reliability, we leverage the fact that the marginal contribution $v(S\cup\{u\})-v(S)$ is bounded in $[-1,1]$. 
Applying Hoeffding's inequality and a union bound over a player universe $\mathcal{P}$, with probability at least $1-\delta$ we have simultaneously for all $u\in\mathcal{P}$ that
$\phi_u \ge \widehat{\phi}_u - b(M,\delta,|\mathcal{P}|)$, where $b(M,\delta,|\mathcal{P}|) = 2\sqrt{\log(2|\mathcal{P}|/\delta)/(2M)}$
represents the estimation uncertainty radius. We define $\mathcal{B}_u := \widehat{\phi}_u - b(M,\delta,|\mathcal{P}|)$ as the lower confidence bound. Intuitively, $\mathcal{B}_u$ serves as a conservative estimate of a player's true importance, penalizing the empirical mean $\widehat{\phi}_u$ by the sampling noise to prevent overestimation. Then we can construct our provable explanation by finding a subset of players whose cumulative certified contributions are sufficient to ensure the risk control.

At the example level, we compute $\widehat{\phi}^{\mathrm{ex}}_u$ on $\mathcal{P}_{\mathrm{ex}}$ using $m_{\mathrm{ex}}$ permutations and a failure budget $\delta_{\mathrm{ex}}$. We order the players by non-increasing lower confidence bounds $\mathcal{B}^{\mathrm{ex}}_{u}$ to prioritize the most reliably influential examples. To identify a sufficient subset, we select the smallest number of top-ranked examples $k$ such that their cumulative certified contribution satisfies $\sum_{i=1}^k \mathcal{B}^{\mathrm{ex}}_{u_i} \ge 1-\alpha$, and report this set as $S^\star_{\mathrm{ex}}$.
Conditioned on the selected examples $S^\star_{\mathrm{ex}}$, we refine the explanation to the step level using the value function $v_{\mathrm{st}}$. We compute $\widehat{\phi}^{\mathrm{st}}_u$ on the step universe $\mathcal{P}_{\mathrm{st}}$ with $m_{\mathrm{st}}$ permutations and budget $\delta_{\mathrm{st}}$. Similarly, we sort the steps by non-increasing $\mathcal{B}^{\mathrm{st}}_u$ and select the smallest $k$ steps necessary to meet the condition $\sum_{i=1}^k \mathcal{B}^{\mathrm{st}}_{u_i} \ge 1-\alpha$. The resulting set of reasoning steps is reported as $T^\star$.




\vspace{-1mm}
\begin{figure*}[t] 
  \centering
  \includegraphics[width=\textwidth]{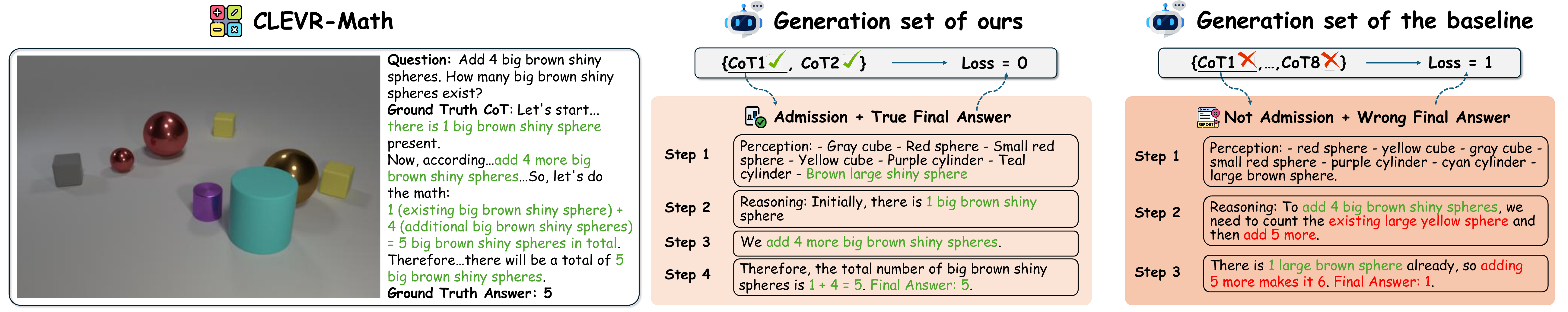}
  \vspace{-6mm}
  \caption{Visualization of our proposed conformal prediction framework on CLEVR-Math.}\vspace{-4mm}
  \label{fig:example}
\end{figure*}

\begin{theorem}
\label{thm:two_level_exp}
Let $z_t$ be a test example that is i.i.d. with the calibration examples in $\mathcal{D}_{\mathrm{cal}}$, and let
$\mathcal H := \{\,L^{\mathrm{RA}}(z_t;\hat\lambda(\theta_0),\theta_0)=1\,\}$
denote the event that the base model $\theta_0$ fails on $z_t$.
Condition on the validity of the thresholds used in evaluating
$v_{\mathrm{ex}}$ and $v_{\mathrm{st}}$.
Assume that for a target level $\alpha \in (0,1)$ and error bounds $\xi_{\mathrm{ex}}, \xi_{\mathrm{st}} \ge 0$, the condition
$
v(\mathcal P\setminus U) \ge v(\mathcal P) -\sum_{u\in U}\phi_u - \xi
$
holds for $(v,\mathcal P,\xi)=(v_{\mathrm{ex}},\mathcal P_{\mathrm{ex}},\xi_{\mathrm{ex}})$
and
$(v,\mathcal P,\xi)=(v_{\mathrm{st}},\mathcal P_{\mathrm{st}},\xi_{\mathrm{st}})$.

\noindent\textbf{(i) (Example-level).}
With probability at least $1-\delta_{\mathrm{ex}}$ over the Shapley MC, the example subset $S^\star_{\mathrm{ex}}$
identified by the stopping condition $\sum_{u\in S^\star_{\mathrm{ex}}}\mathcal B^{\mathrm{ex}}_{u} \ge 1-\alpha$
is sufficient to achieve the risk control:
\begin{align}
\label{eq:vex_control}
\mathbb{E}[
L^{\mathrm{RA}}(z_t;\hat\lambda(\theta^{S^\star_{\mathrm{ex}}}),\theta^{S^\star_{\mathrm{ex}}})
\mid \mathcal H
]
\ \le\ \alpha+\xi_{\mathrm{ex}}.
\end{align}

\vspace{-1mm}
\noindent\textbf{(ii) (Step-level).}
Conditionally on $S^\star_{\mathrm{ex}}$, with probability at least $1-\delta_{\mathrm{st}}$ over the step-level Shapley MC,
the step subset $T^\star$ identified by the stopping condition $\sum_{u\in T^\star}\mathcal B^{\mathrm{st}}_{u} \ge 1-\alpha$
is sufficient to achieve the risk control:
\begin{align}
\label{eq:vst_control}
\mathbb{E}[
L^{\mathrm{RA}}(z_t;\hat\lambda(\theta^{T^\star}),\theta^{T^\star})
\mid \mathcal H
]
\ \le\ \alpha+\xi_{\mathrm{st}}.
\end{align}

\end{theorem}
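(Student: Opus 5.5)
The plan is to prove both parts with the same template and then instantiate it twice, first with $(v_{\mathrm{ex}},\mathcal P_{\mathrm{ex}},\delta_{\mathrm{ex}},\xi_{\mathrm{ex}})$ and then, conditionally on $S^\star_{\mathrm{ex}}$, with $(v_{\mathrm{st}},\mathcal P_{\mathrm{st}},\delta_{\mathrm{st}},\xi_{\mathrm{st}})$. Conditionally on $S^\star_{\mathrm{ex}}$, the step universe is fixed and the step-level permutations are fresh randomness. So part (ii) is part (i) verbatim, and only part (i) needs to be carried out.

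\textbf{Step 1: turn the Hoeffding bound into a certified lower bound on the Shapley mass.} For each fixed $u$, the summands of $\widehat\phi_u$ are i.i.d. in $[-1,1]$ with mean $\phi_u$. Hoeffding's inequality with range $2$ and a union bound over $|\mathcal P|$ players give the event $E$, of probability at least $1-\delta_{\mathrm{ex}}$, on which $\phi_u\ge \mathcal B^{\mathrm{ex}}_u$ for all $u$ simultaneously. This is exactly the radius $b(M,\delta,|\mathcal P|)$ stated in the text. The key subtlety is that $S^\star_{\mathrm{ex}}$ is selected using the same estimates. Because the bound holds uniformly over all players, it applies to any data-dependent subset, so on $E$
\begin{align*}
\sum_{u\in S^\star_{\mathrm{ex}}}\phi_u\ \ge\ \sum_{u\in S^\star_{\mathrm{ex}}}\mathcal B^{\mathrm{ex}}_u\ \ge\ 1-\alpha .
\end{align*}

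\textbf{Step 2: pass from Shapley mass to the value of the retained coalition.} Here I would apply the assumed near-additivity condition. To get a lower bound on $v(S^\star_{\mathrm{ex}})$, I take $U=\mathcal P\setminus S^\star_{\mathrm{ex}}$. By efficiency, $\sum_{u\in U}\phi_u = v(\mathcal P)-v(\emptyset)-\sum_{u\in S^\star}\phi_u$. Substituting into the condition gives
\begin{align*}
v(S^\star_{\mathrm{ex}})\ \ge\ v(\emptyset)+\sum_{u\in S^\star_{\mathrm{ex}}}\phi_u-\xi_{\mathrm{ex}}\ \ge\ 1-\alpha-\xi_{\mathrm{ex}},
\end{align*}
using $v(\emptyset)\ge 0$ since $v$ takes values in $\{0,1\}$. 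This is where the stopping rule threshold $1-\alpha$ enters.

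The main obstacle is reconciling the pointwise value $v$, a $\{0,1\}$ quantity for the specific $z_t$, with the expectation conditioned on $\mathcal H$ in the statement. My plan is to interpret $v$ and $\phi$ as expectations over CoRAP's sampling randomness and over $z_t$ given $\mathcal H$. The conditioning on $\mathcal H$ is then only a restriction of the distribution on which the game is defined. I would therefore define the game with $v(S)=1-\mathbb E[L^{\mathrm{RA}}(z_t;\hat\lambda(\theta^S),\theta^S)\mid\mathcal H]$; the pointwise version is the special case of a degenerate distribution. Linearity keeps the Shapley values and the marginal bounds in $[-1,1]$, so the Monte Carlo estimator stays unbiased and Step 1 is unaffected.

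The assumed validity of the thresholds used in evaluating $v$ (the conclusion of Theorem~\ref{thm:CoRAP_ltt} for each $\theta^S$) is needed only so that $\hat\lambda(\theta^S)$ is well defined and not an abstention. This makes $L^{\mathrm{RA}}$, and hence $v$, meaningful. Rearranging $v(S^\star_{\mathrm{ex}})\ge 1-\alpha-\xi_{\mathrm{ex}}$ then gives \eqref{eq:vex_control} on $E$, which proves (i); (ii) follows by the same argument.
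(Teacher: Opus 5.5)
Your Steps 1 and 2 reproduce the paper's argument. The uniform Hoeffding/union-bound event gives $\phi_u\ge\mathcal B_u$ for all players, hence for the data-dependent $S^\star_{\mathrm{ex}}$. Near-additivity with $U=\mathcal P\setminus S^\star_{\mathrm{ex}}$, plus efficiency, then gives $v(S^\star_{\mathrm{ex}})\ge v(\varnothing)+\sum_{u\in S^\star_{\mathrm{ex}}}\phi_u-\xi_{\mathrm{ex}}$. Using $v(\varnothing)\ge 0$ instead of the paper's $v(\varnothing)=0$ on $\mathcal H$ is valid. It even spares you the paper's identification of the empty step coalition with $\theta_0$ in part (ii). Reducing (ii) to (i) by conditioning on $S^\star_{\mathrm{ex}}$ is also what the paper does.

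The step that fails is your resolution of the ``main obstacle.'' Redefining the game as $v(S)=1-\mathbb E[L^{\mathrm{RA}}(z_t;\hat\lambda(\theta^S),\theta^S)\mid\mathcal H]$, with the expectation over $z_t$, changes the object that both the hypothesis and the algorithm refer to. The near-additivity condition, $\widehat\phi_u$, $\mathcal B^{\mathrm{ex}}_u$ and $S^\star_{\mathrm{ex}}$ are all defined through the per-instance game \eqref{eq:vex} at the single test point being explained. The Monte Carlo summands are marginal contributions evaluated at that fixed $z_t$, so their mean is the per-instance Shapley value, not the Shapley value of the $z_t$-averaged game. Your claim that the estimator ``stays unbiased and Step 1 is unaffected'' would need a fresh draw of $z_t$ from its law given $\mathcal H$ for every permutation, and the procedure makes no such draw. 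Moreover, $S^\star_{\mathrm{ex}}$ itself depends on $z_t$, so the averaged game evaluated at a coalition is not the left-hand side of \eqref{eq:vex_control}. (Averaging only over CoRAP's sampling randomness, redrawn at each evaluation, would keep the estimator unbiased, but it still moves the hypothesis onto a different game.)

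None of this reinterpretation is needed. Keep $v_{\mathrm{ex}}$ pointwise as in \eqref{eq:vex}. On the Monte Carlo event, Step 2 gives $v_{\mathrm{ex}}(S^\star_{\mathrm{ex}};z_t)\ge 1-\alpha-\xi_{\mathrm{ex}}$, i.e.\ $L^{\mathrm{RA}}(z_t;\hat\lambda(\theta^{S^\star_{\mathrm{ex}}}),\theta^{S^\star_{\mathrm{ex}}})\le\alpha+\xi_{\mathrm{ex}}$ pointwise, in particular on $\mathcal H$. Since the loss is binary, it is in fact $0$ when $\alpha+\xi_{\mathrm{ex}}<1$. Monotonicity of conditional expectation given $\mathcal H$ then yields \eqref{eq:vex_control}. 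This one-line pointwise-to-expectation step is how the paper finishes, and the same line gives \eqref{eq:vst_control}.
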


\noindent Theorem~\ref{thm:two_level_exp} provides a two-level guarantee on our unified explanation framework.
Part (i) certifies the example-level explanation by stating that if the cumulative certified contribution of the selected examples $S^\star_{\mathrm{ex}}$ meets the threshold $1-\alpha$, the subset is sufficient to ensure the risk control in Eq.~\eqref{eq:vex_control} with probability $1-\delta_{\mathrm{ex}}$.
This implies that a model trained solely on $S^\star_{\mathrm{ex}}$ satisfies the valid coverage requirements established in Theorem~\ref{thm:CoRAP_ltt}.
Part (ii) extends this certification to the step-level explanation given $S^\star_{\mathrm{ex}}$.
Specifically, provided that the cumulative certified contribution of the selected steps $T^\star$ exceeds the threshold $1-\alpha$, the resulting model trained exclusively on the reasoning steps in $T^\star$ within $S^\star_{\mathrm{ex}}$ is guaranteed to achieve the risk control in Eq.~\eqref{eq:vst_control} with probability $1-\delta_{\mathrm{st}}$.
We also provide the detailed proof in Appendix~\ref{sec:proof}.

\textbf{Discussion.} We analyze the time complexity for a single test instance and propose several strategies to further reduce the computational overhead. At the example level, the total baseline complexity is $O (m_{\mathrm{ex}}(n_\mathrm{tr}^2+n_\mathrm{tr} K_{\mathrm{max}}) + m_{\mathrm{st}}(|\mathcal{P}_{\mathrm{st}}|^2+|\mathcal{P}_{\mathrm{st}}|K_{\mathrm{max}})  + |\Lambda|\,n_{\mathrm{cal}}K_{\mathrm{max}} ).$
By leveraging influence functions and machine unlearning techniques~\cite{zhao2024rethinking,chen2025survey}, we can substitute computationally expensive retraining with efficient approximate updates. With a one-time setup cost $I$ (e.g., gradient caching), the complexity reduces to $O (I + m_{\mathrm{ex}}(n_\mathrm{tr}+n_\mathrm{tr} K_{\mathrm{max}})+ m_{\mathrm{st}}(|\mathcal{P}_{\mathrm{st}}|+|\mathcal{P}_{\mathrm{st}}|K_{\mathrm{max}}) + |\Lambda|\,n_{\mathrm{cal}}K_{\mathrm{max}} ).$
To further optimize, we partition the $n_{\mathrm{tr}}$ players into $G$ disjoint groups. We employ a warm-start preselection with $m_0$ permutations to identify one active group by the value function $v = 1$, restricting the intensive estimation to this sole group. The final optimized complexity is $O(I + m_0(G+GK_{\mathrm{max}}) + m_{\mathrm{ex}}K_{\mathrm{max}}+ 
m_{\mathrm{st}}(|\mathcal P_{\mathrm{st}}|+|\mathcal P_{\mathrm{st}}|K_{\mathrm{max}})
+ |\Lambda|\,n_{\mathrm{cal}}K_{\mathrm{max}}
),$
which yields significant savings when the average group size $\bar g = n_{\mathrm{tr}} / G \ll n_{\mathrm{tr}}$. This optimization effectively decouples the inference-time cost from the quadratic scaling of the training data, enabling the practical application of our method to large-scale reasoning models.

\vspace{-1mm}
\section{Experiments}
\vspace{-1.5mm}
In this section, we conduct extensive experiments to evaluate the effectiveness of our proposed methods. More experimental details and results (e.g., experimental results about uncertainty on various models) are deferred to the Appendix of the paper.

\textbf{Datasets and models.} In experiments, we evaluate our method on two challenging multimodal reasoning datasets. CLEVR-Math~\cite{lindstrom2022clevr} assesses mathematical reasoning in visual contexts, while ScienceQA~\cite{lu2022learn} provides a diverse set of science-domain questions based on images. We follow the preprocessing pipeline from \cite{tan2025reason}. We consider various LRMs, including 
LLaVA-CoT~\cite{xu2025llava}, 
LMM-R1~\cite{peng2025lmm}, and R1-Onevision~\cite{yang2025r1} models.

\textbf{Baselines.}
For conformal prediction, we compare against \textit{CP-Router}~\cite{su2025cp}, a recent uncertainty method tailored to LRMs.
For explanation and data attribution methods in LRMs, to the best of our knowledge there is no dedicated prior method.
We therefore include a \textit{Random} baseline that uniformly samples a subset of examples (or steps), treats them as important,  and fine-tunes under identical training and inference budgets.

\textbf{Implementation details.} In experiments, we use a temperature of 1.2 and top-p sampling of 0.85 to produce candidate sequences, where the sampling budget $K_{max} = 16$ per input. For each dataset, we sample 1500 calibration examples from the original training corpus and 100 test examples from the test set, using the remaining training examples as the training data. All experiments are run for 8 trials, and we report the averaged results.





\vspace{-1mm}
\subsection{Conformal Prediction for LRMs}
\vspace{-1mm}

\textbf{Validity.} We evaluate the validity of our proposed uncertainty quantification framework on large reasoning models by constructing valid uncertainty sets for reasoning-answer pairs. Experiments are conducted on CLEVR-Math and ScienceQA using LMM-R1, under target significance levels $\alpha = 0.3, 0.4,$ and $0.5$. As illustrated in Fig.~\ref{fig:cp_validity_clevr} and~\ref{fig:cp_validity_sqa}, the empirical losses of our methods remain below the target level~$\alpha$, demonstrating that our method satisfies Eq.~\eqref{eq:risk_control} across various $\alpha$ and thus upholds the validity guarantee about the reasoning-answer structure established in Theorem~\ref{thm:CoRAP_ltt}. In contrast, CP-Router shows empirical losses exceeding the target levels in multiple settings, indicating that it fails to maintain the desired risk control.

\textbf{Efficiency.} In Fig.~\ref{fig:cp_eff_clevr} and~\ref{fig:cp_eff_sqa}, we examine the efficiency of our uncertainty quantification framework by prediction set size across different LRMs. Experiments are conducted on CLEVR-Math and ScienceQA using LMM-R1 under significance levels $\alpha = 0.3, 0.4,$ and $0.5$, maintaining consistency with the validity evaluation. As shown, our proposed method consistently produces compact uncertainty sets over reasoning steps compared with CP-Router, enabling efficient interpretation of the model’s thought process. For instance, on CLEVR-Math, our method achieves an average set size of only $2.76$ with LMM-R1 at $\alpha=0.4$, meaning that fewer than three reasoning traces are typically sufficient to cover the correct answer while retaining valid coverage guarantees. In contrast, CP-Router achieves an average set size of $5.00$ in the same setting. Fig.~\ref{fig:example} visually demonstrates this efficiency on a CLEVR-Math example. Our method generates a compact set containing only two reasoning traces. As shown in the green-coded section, these traces pass the admission function and achieve a true final answer, resulting in zero loss. In contrast, CP-Router generates a large set with eight traces. Its example trace, containing severe reasoning flaws highlighted in red, is rejected by the admission function and produces a wrong final answer, leading to a loss of 1.
These combined results show that our method efficiently quantifies uncertainty by producing compact and high-quality sets of reasoning traces, making it a practical tool for analyzing complex visual reasoning tasks.
\begin{table}[t]
  \centering
  \small 
  \setlength{\tabcolsep}{1pt} 
  \begin{tabular}{llcc}
    \toprule
    Model & Method & Top-1 Shapley value & Success rate \\
    \midrule
    \multirow{2}{*}{LMM-R1} & Random & $0.042\pm 0.026$ & $0.000\pm 0.000$ \\
     & Ours & $\mathbf{0.433\pm 0.116}$ & $\mathbf{0.833\pm 0.167}$ \\
    \midrule
    \multirow{2}{*}{R1-Onevision} & Random & $0.021\pm 0.038$ & $0.000\pm 0.000$ \\
     & Ours & $\mathbf{0.356\pm 0.058}$ & $\mathbf{1.000\pm 0.000}$ \\
    \bottomrule
  \end{tabular}\vspace{-2mm}
    \caption{Data-level explanations of the uncertainty sets.}
  \label{tab:data_X}\vspace{-1mm}
\end{table}

\begin{table}[t]
  \centering
  \small 
  \setlength{\tabcolsep}{1pt} 
  \begin{tabular}{llcc}
    \toprule
    Model & Method & Top-1 Shapley value & Success rate \\
    \midrule
    \multirow{2}{*}{LMM-R1} & Random & $0.020\pm 0.016$ & $0.000\pm 0.000$ \\
     & Ours & $\mathbf{0.739\pm 0.128}$ & $\mathbf{0.875\pm 0.125}$ \\
    \midrule
    \multirow{2}{*}{R1-Onevision} & Random & $0.006\pm 0.005$ & $0.000\pm 0.000$ \\
     & Ours & $\mathbf{0.589\pm 0.119}$ & $\mathbf{0.750\pm 0.164}$ \\
    \bottomrule
  \end{tabular}\vspace{-2mm}
    \caption{Step-level explanations of the uncertainty sets.}
  \label{tab:step_X}\vspace{-6mm}
\end{table}

\vspace{-1.5mm}
\subsection{Explanations of Prediction Sets in LRMs}
\vspace{-1mm}

\textbf{Data-level explanations.} As shown in Table \ref{tab:data_X}, our proposed example-level explanations substantially outperform a random retrieval baseline on CLEVR-Math using LMM-R1 and R1-Onevision when $\alpha=0.5$, with the value of top-1 increasing from 0.042 to 0.433 using LMM-R1 and from 0.021 to 0.356 using R1-Onevision. Using the selected data to fine-tune changes the outcome of the model, so the uncertainty set for the test data shifts from excluding the true final answer to including it on both models, while fine-tuning on randomly chosen examples does not achieve this. This demonstrates that our method accurately identifies pivotal training evidence via Shapley value, providing a suitable explanation for why the uncertainty set ultimately includes the true final answer. Furthermore, to empirically validate the theoretical guarantee in Theorem \ref{thm:two_level_exp} (i), we set the example-level failure parameter $\delta_{\mathrm{ex}} = 0.25$. The success rate in Table \ref{tab:data_X}, which measures the empirical frequency of successfully identifying a sample subset sufficient to maintain the CoRAP risk control, is $0.833$ for LMM-R1 and $1.000$ for R1-Onevision. Both results are greater than the required level $1-\delta_{\mathrm{ex}}$, empirically verifying the effectiveness of our proposed example-level explanation guarantee.


\textbf{Step-level explanations.} As shown in Table \ref{tab:step_X}, our proposed step-level explanations substantially outperform a random step-selection baseline on CLEVR-Math using LMM-R1 and R1-Onevision when $\alpha=0.5$, with the top-1 Shapley value of steps increasing from 0.020 to 0.739 using LMM-R1 and from 0.006 to 0.589 using R1-Onevision. Similar to the data-level analysis, we further empirically validate the theoretical guarantee for step-level attribution in Theorem \ref{thm:two_level_exp} (ii). Setting the step-level failure probability $\delta_{\mathrm{st}} = 0.25$, the success rate in Table \ref{tab:step_X} is $0.875$ for LMM-R1 and $0.750$ for R1-Onevision. Both results surpass the required threshold $1-\delta_{\mathrm{st}} = 0.75$, confirming the validity of our step-level explanation guarantee. These results present the influence of intermediate reasoning traces. We find that the uncertainty set's shift from excluding to including the true final answer is driven by critical reasoning steps within the model's generation process. Our method accurately identifies these pivotal internal reasoning steps by Shapley value, thereby providing suitable explanations for why the uncertainty set of test data ultimately includes the true final answer at the step-level for LRMs. 

\begin{figure}[t] 
    \centering
    \begin{subfigure}[t]{0.48\columnwidth} 
        \centering
        \includegraphics[width=\linewidth]{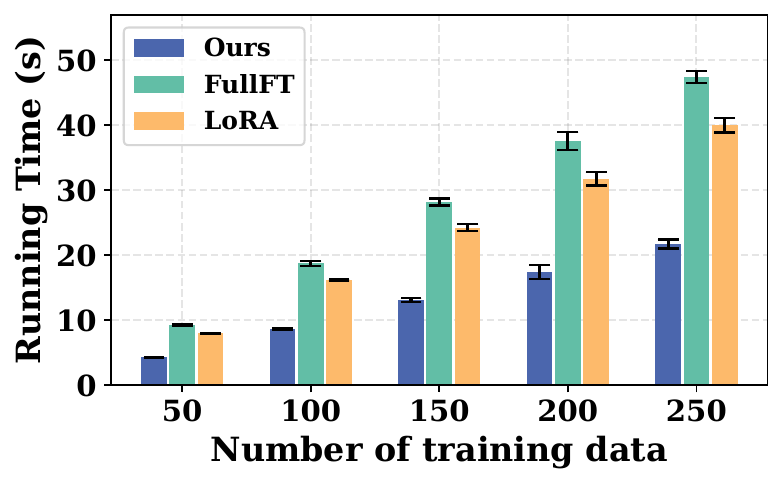} 
        \vspace{-6.5mm}\caption{Fine-tuning methods} 
        \label{fig:time_left} 
    \end{subfigure}
    \hfill 
    \begin{subfigure}[t]{0.48\columnwidth} 
        \centering
        \includegraphics[width=\linewidth]{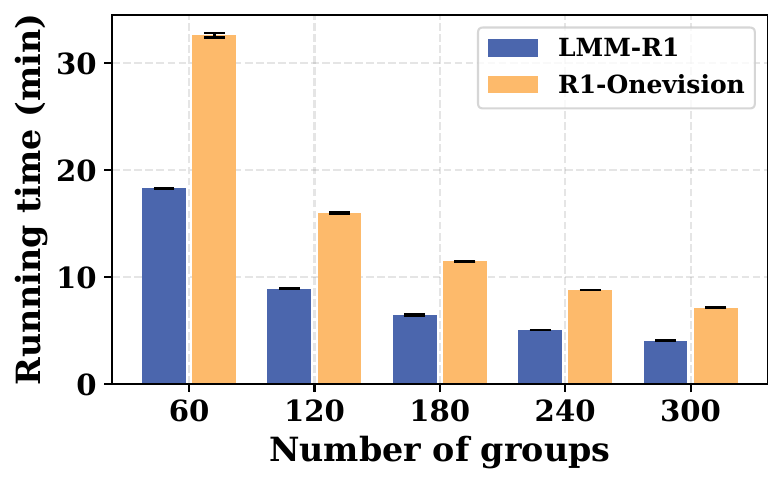} 
        \vspace{-6.5mm}\caption{Group numbers} 
        \label{fig:time_right} 
    \end{subfigure}
    \vspace{-2.3mm} 
    \caption{Analysis of running time.} 
    \label{fig:overall_time_comparison} 
    \vspace{-6mm} 
\end{figure}

\vspace{-1.9mm}
\subsection{Ablation Study}\vspace{-1.5mm}

First, we compare the running time of our proposed method against full fine-tuning (FullFT) and LoRA
\cite{hu2022lora} on CLEVR-Math, with their methods run for 3 epochs. The results presented in Fig.~\ref{fig:time_left} reveal that our method achieves a significantly lower computational cost when adopting the influence function methods. When the number of training data increases, the running time required by ours is substantially less than that of both FullFT and LoRA, demonstrating its superior efficiency over these traditional techniques.

Then, we show the influence of group numbers for the running time of our proposed method using LMM-R1 and R1-Onevision on CLEVR-Math. The results are depicted in Fig.~\ref{fig:time_right}, where we test configurations with the number of groups ranging from 60 to 300. For both models, the running time demonstrates a significant decrease as the number of groups increases. This empirically confirms that our groupwise operation provides substantial computational savings by partitioning groups, which aligns with our methodology. Furthermore, LMM-R1 consistently outperforms R1-Onevision across all tested group sizes, which is expected as LMM-R1 is a smaller model with fewer parameters.

We also empirically examine the influence of the FWER control level $\varepsilon$ on the uncertainty set introduced in Theorem~\ref{thm:CoRAP_ltt}. It controls the confidence level $1-\varepsilon$ that the configurations retained in $\Lambda_{\mathrm{valid}}(\theta)$ satisfy the risk guarantee in Eq.~\eqref{eq:risk_control}. As shown in Table~\ref{tab:threshold_admission}, our results across four different $\varepsilon$ validate on CLEVR-Math and ScienceQA using LMM-R1. The average empirical probability computed over random subsets of the calibration data consistently meets or exceeds the $1-\varepsilon$ target for both datasets (e.g., $0.875$ observed on ScienceQA when $\varepsilon=0.2$). 
Moreover, these results illustrate the trade-off controlled by $\varepsilon$, as allowing a higher FWER by increasing $\varepsilon$ makes the uncertainty set less conservative, which in turn leads to producing modestly smaller prediction set sizes.

\begin{table}[t]
  \centering
  \small
  \setlength{\tabcolsep}{3pt}
  \begin{tabular}{lccc}
    \toprule
    Dataset & Level $\varepsilon$ & Probability  & Prediction set size \\
    \midrule
    \multirow{4}{*}{CLEVR-Math}
      & 0.2 & $0.875\pm 0.125$ & $2.775 \pm 0.821$ \\
      & 0.4 & $0.625\pm 0.183$ & $2.750 \pm 0.429$ \\
      & 0.6 & $0.500 \pm 0.189$ & $2.575 \pm 0.301$ \\
      & 0.8 & $0.500 \pm 0.189$ & $2.525 \pm 0.285$ \\
    \midrule
    \multirow{4}{*}{ScienceQA}
      & 0.2 & $0.875\pm 0.125$ & $6.525 \pm 0.360$ \\
      & 0.4 & $0.625\pm 0.183$ & $6.375 \pm 0.799$ \\
      & 0.6 & $0.500 \pm 0.189$ & $6.025 \pm 0.671$ \\
      & 0.8 & $0.375\pm 0.183$ & $5.850 \pm 0.568$ \\
    \bottomrule
  \end{tabular}\vspace{-2mm}
  \caption{Empirical probability and prediction set size of prediction sets on LMM-R1 across different levels $\varepsilon$.}
  \label{tab:threshold_admission}\vspace{-6mm}
\end{table}

\begin{figure}[H] 
    \centering
    \begin{subfigure}[t]{0.49\columnwidth} 
        \centering
        \includegraphics[width=\linewidth]{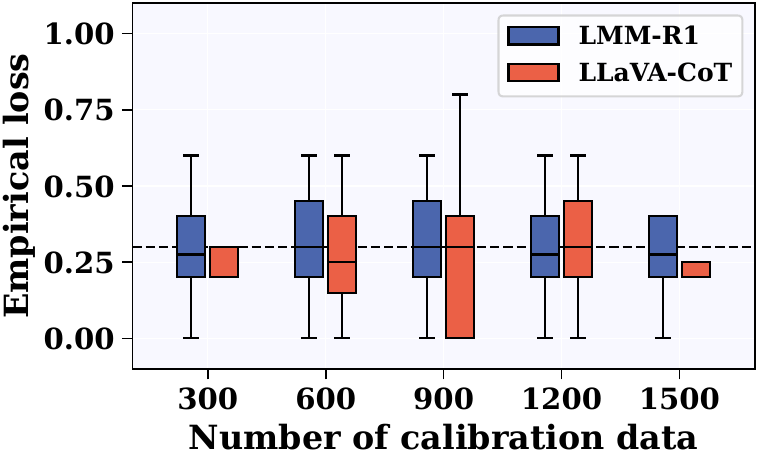} 
        \vspace{-6mm}\caption{Validity} 
        \label{fig:cal_validity} 
    \end{subfigure}
    \hfill 
    \begin{subfigure}[t]{0.49\columnwidth} 
        \centering
        \includegraphics[width=\linewidth]{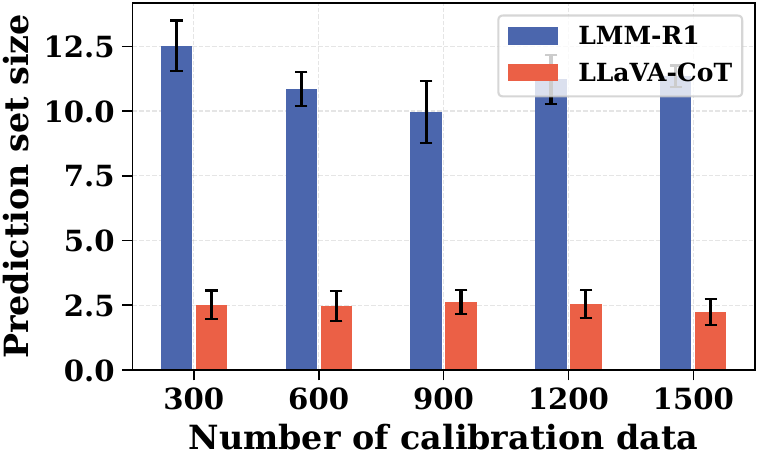} 
        \vspace{-6mm}\caption{Efficiency} 
        \label{fig:cal_eff} 
    \end{subfigure}
    \vspace{-8pt} 
    \caption{Influence of calibration set size for CoRAP.} 
    \label{fig:abl_cal} 
    \vspace{-4mm} 
\end{figure}

We further explore how the calibration set size influences CoRAP in terms of validity and efficiency by varying the number of examples from 300 to 1500 in increments of 300 using LMM-R1 and LLaVA-CoT on ScienceQA. Results in Fig.~\ref{fig:cal_validity} confirm that CoRAP consistently keeps the empirical loss below the target threshold $\alpha = 0.3$. Increasing the calibration size reduces loss variance and indicates improved stability in empirical loss on both models. Regarding the efficiency of our proposed methods, Fig.~\ref{fig:cal_eff} reveals that the prediction set size stabilizes without significant fluctuation. These findings confirm that CoRAP provides uncertainty guarantees while maintaining stable prediction set sizes with limited calibration data.
\begin{figure}[t]
    \centering
    \begin{subfigure}[t]{0.48\linewidth}
        \centering
        \includegraphics[width=\linewidth]{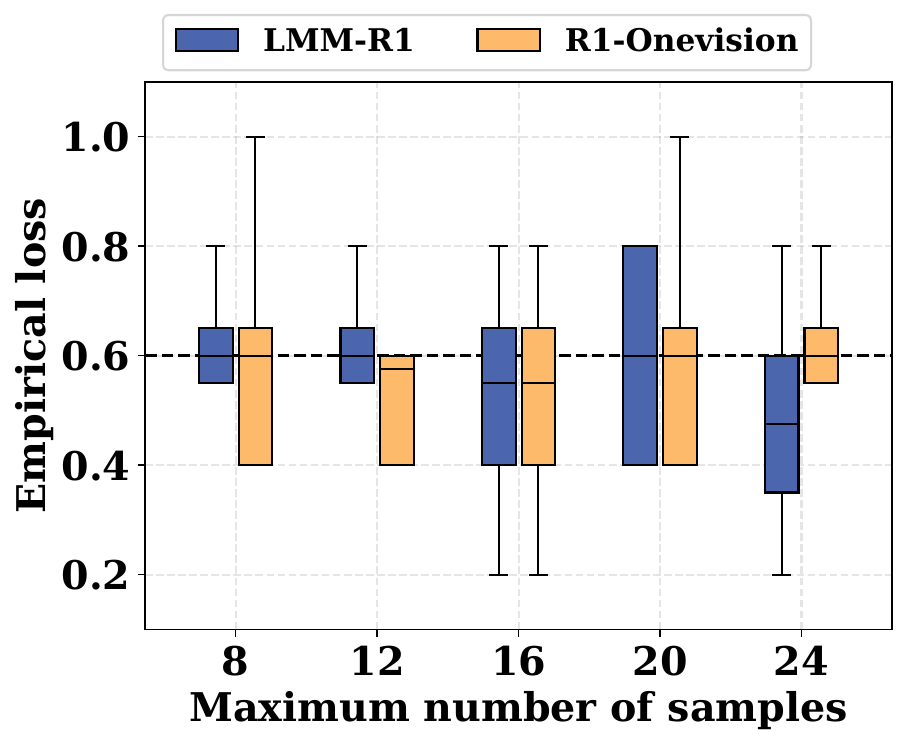}
        \vspace{-6mm}\caption{Validity}
        \label{fig:max_cov}
    \end{subfigure}
    \hfill
    \begin{subfigure}[t]{0.48\linewidth}
        \centering
        \includegraphics[width=\linewidth]{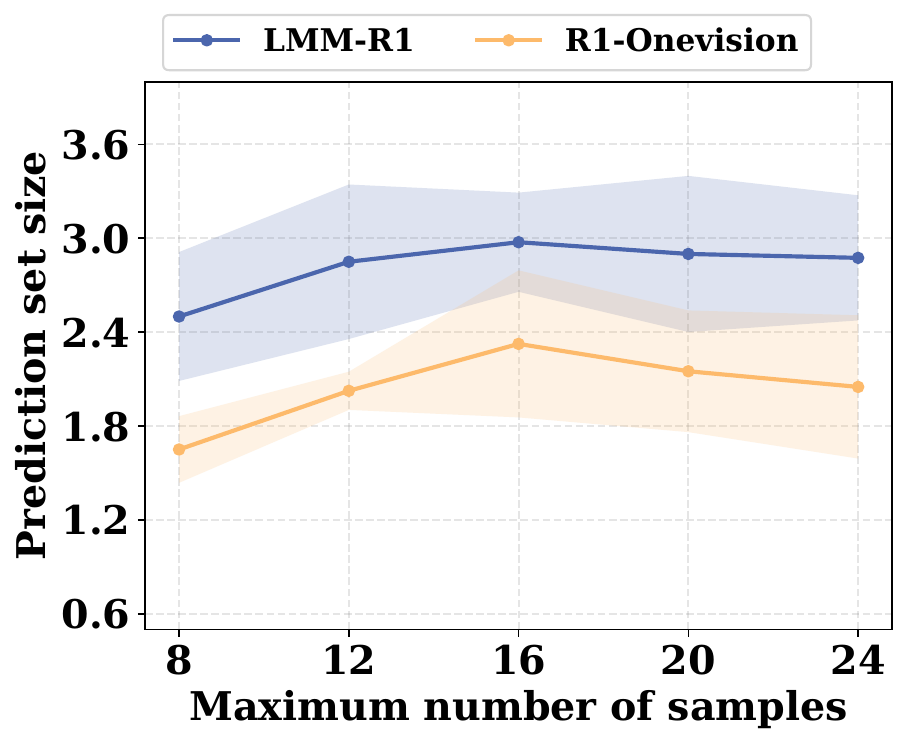} 
        \vspace{-6mm}\caption{Efficiency}
        \label{fig:max_eff}
    \end{subfigure}
    \vspace{-2.5mm}
    \caption{Influence of maximum numbers of samples.}
    \vspace{-7mm}
    \label{fig:max_appendix}
\end{figure}

We further investigate how the maximum sampling budget $K_{\text{max}}$ influences CoRAP's performance. Fig.~\ref{fig:max_cov} confirms that CoRAP strictly adheres to the reliability constraint, with the empirical loss consistently fluctuating near the target $\alpha=0.6$, regardless of the sampling scale. Notably, while LMM-R1 shows slightly higher variance in risk control at $K_{\text{max}}=20$, the overall trend remains well-calibrated. From an efficiency perspective in Fig.~\ref{fig:max_eff}, we observe that the prediction set sizes for both models initially increase and then stabilize as $K_{\text{max}}$ expands. Moreover, R1-Onevision consistently produces more compact prediction sets than LMM-R1, indicating a higher density of correct reasoning paths within its early samples. The convergence of set sizes at higher $K_{\text{max}}$ values demonstrates that CoRAP is computationally economical, providing rigorous uncertainty guarantees without necessitating exhaustive sampling.
\vspace{-1.5mm}
\section{Conclusion}
\label{sec:conclusion}
\vspace{-1.5mm}
In this paper, we design a novel framework to quantify and explain the uncertainty in LRMs with guarantees. To address the challenge of ensuring logical interdependent generations, we first propose CoRAP, which quantifies the uncertainty of the reasoning-answer structure with rigorous guarantees. Subsequently, to resolve the computational complexity in tracing the origins of uncertainty, we design a hierarchical example-to-step explanation framework using Shapley values with guarantees. Furthermore, we conduct the theoretical analysis for our proposed uncertainty quantification and explanation methods. Extensive experiments demonstrate the effectiveness of our approach in ensuring valid and interpretable uncertainty quantification across complex reasoning tasks.

\section{Limitations}
\label{limitations}
\vspace{-1mm}
Our experiments demonstrate that our methods achieve rigorous coverage guarantees and provide provable explanations for uncertainty, confirming the practical efficacy of our framework. However, our current study has several limitations. First, the experimental results are primarily based on specific datasets, so additional experiments on other types of data (e.g., legal or medical) are needed to verify generality. Second, the present study considers multimodal settings using Vision-Language Models. An important next step is to extend our evaluation to text-only settings, applying the framework to standard Large Language Models to assess whether the efficiency and stability gains remain consistent in unimodal tasks.

\section*{Acknowledgments}
\vspace{-1mm}
This work is supported in part by the US National Science Foundation under grants CNS-2350332 and IIS-2442750. Any opinions, findings, and conclusions or recommendations expressed in this material are those of the author(s) and do not necessarily reflect the views of the National Science Foundation. 
\vspace{-1mm}

\bibliography{custom,anthology-1,anthology-2}

\appendix

\section{Algorithm for CoRAP}
\label{sec:algo}
\begin{algorithm}[t]
\caption{Construction of Conformal Reasoning--Answer Prediction (CoRAP) Set}
\label{alg:corap_infer}
\begin{algorithmic}[1]
\Require Input $x$, query $q$, model $\pi_\theta$, functions $Q, F, A$, thresholds $\hat{\lambda}(\theta)=(\hat{\lambda}_1(\theta), \hat{\lambda}_2(\theta), \hat{\lambda}_3(\theta))$, max samples $K_{\max}$
\Ensure Prediction set $\mathcal{C}^{\mathrm{RA}}_{\hat{\lambda}(\theta)}(x,q;\theta)$
\State Initialize $\mathcal{C}^{\mathrm{RA}}_{\hat{\lambda}(\theta)}(x,q;\theta) \gets \emptyset$
\For{$k \gets 1$ \textbf{to} $K_{\max}$}
    \State Sample $\hat{a}_k = \hat{r}_k \Vert \hat{y}_k \sim \pi_\theta(\cdot \mid x,q)$
    \If{$Q(x, q, \hat{a}_k) < \hat{\lambda}_1(\theta)$}
         \State \textbf{continue}
    \EndIf
    \State $\mathcal{C}^{\mathrm{RA}}_{\hat{\lambda}(\theta)}(x,q;\theta) \gets \mathcal{C}^{\mathrm{RA}}_{\hat{\lambda}(\theta)}(x,q;\theta) \cup \{\hat{a}_k\}$
    \If{$\mathcal{C}^{\mathrm{RA}}_{\hat{\lambda}(\theta)}(x,q;\theta) \neq \emptyset$}
        \State $a^* \gets \arg\max_{a \in \mathcal{C}^{\mathrm{RA}}_{\hat{\lambda}(\theta)}(x,q;\theta)} Q(x, q, a)$, where $a^* = r^* \Vert y^*$
        \If{$F(\mathcal{C}^{\mathrm{RA}}_{\hat{\lambda}(\theta)}(x,q;\theta)) \ge \hat{\lambda}_2(\theta)$ \textbf{and} $A(x, q, a^*) \ge \hat{\lambda}_3(\theta)$}
            \State \textbf{break}
        \EndIf
    \EndIf
\EndFor
\State \Return $\mathcal{C}^{\mathrm{RA}}_{\hat{\lambda}(\theta)}(x,q;\theta)$
\end{algorithmic}
\end{algorithm}

In this section, we provide the detailed procedure for CoRAP. As described in Section~\ref{sec:cra}, we discretize the threshold space into a finite grid $\Lambda$. For a fixed model $\theta$ and each candidate threshold $\lambda=(\lambda_1,\lambda_2,\lambda_3)\in\Lambda$, we systematically evaluate its performance on the calibration set $\mathcal D_{\mathrm{cal}}=\{z_i\}_{i=1}^{n_{\mathrm{cal}}}$ with $z_i=(x_i,q_i,r_i,y_i)$ by running Algorithm~\ref{alg:corap_infer} to construct the prediction set $\mathcal C^{\mathrm{RA}}_{\lambda}(x_i,q_i;\theta)$ and computing the binary loss $L^{\mathrm{RA}}(z_i;\lambda,\theta)\in\{0,1\}$, where $1$ indicates miscoverage. To formally certify rigorous risk control, we treat each $\lambda$ as a hypothesis test with the null $H_{0,\lambda}: \mathbb E[L^{\mathrm{RA}}(z;\lambda,\theta)]>\alpha$. Since the total number of failures $S(\lambda):=\sum_{i=1}^{n_{\mathrm{cal}}} L^{\mathrm{RA}}(z_i;\lambda,\theta)$ follows a Binomial distribution under i.i.d. sampling, we compute the Binomial-tail $p$-value $p^{\mathrm{BT}}_{\lambda}=\Pr(\mathrm{Binom}(n_{\mathrm{cal}},\alpha)\le S(\lambda))$ to quantify the likelihood of observing at most $S(\lambda)$ failures if the true risk exceeded $\alpha$. We then apply standard family-wise error rate (FWER) control at level $\varepsilon$ to $\{p^{\mathrm{BT}}_\lambda\}_{\lambda\in\Lambda}$ to obtain a valid subset of thresholds $\Lambda_{\mathrm{valid}}(\theta)\subseteq\Lambda$. If $\Lambda_{\mathrm{valid}}(\theta)$ is empty, the procedure abstains; otherwise, we select an optimal threshold $\hat\lambda(\theta) = (\hat{\lambda}_1(\theta), \hat{\lambda}_2(\theta), \hat{\lambda}_3(\theta))$ based on a chosen efficiency criterion. Finally, as detailed in Algorithm~\ref{alg:corap_infer}, this $\hat\lambda(\theta)$ is subsequently deployed during inference to dynamically filter sampled reasoning-answer pairs via $\hat{\lambda}_1$ and terminate the sampling process early if the set-level and answer-level quality scores satisfy $\hat{\lambda}_2$ and $\hat{\lambda}_3$, respectively, or when the maximum sampling budget $K_{\max}$ is exhausted.

\section{Proofs of Theorems}
\label{sec:proof}

\begin{manualtheorem}{3.1}\label{thm:CRC_app}
Assume calibration examples in $\mathcal D_{\mathrm{cal}}$ and a test sample $z_t$ are i.i.d., and CoRAP's sampling randomness is independent across examples.
Let $\Lambda_{\mathrm{valid}}(\theta)$ be obtained by applying an FWER-$\varepsilon$ procedure to the $p$-values
for target level $\alpha\in(0,1)$ on model $\theta$.
Then, for any
$\hat\lambda(\theta)\in\Lambda_{\mathrm{valid}}(\theta)$, we have the risk constraint
\begin{align}
\mathbb E[L^{\mathrm{RA}}(z_t;\hat\lambda(\theta),\theta)] \le \alpha,
\end{align}
with probability at least $1-\varepsilon$ over $\mathcal D_{\mathrm{cal}}$.
\end{manualtheorem}

\begin{proof}
Fix a model $\theta$ and a finite grid of candidate thresholds $\Lambda$.
For each $\lambda\in\Lambda$, let $R(\lambda;\theta):=\mathbb E[L^{\mathrm{RA}}(z;\lambda,\theta)]$ denote the population miscoverage risk.
On the calibration set $\mathcal D_{\mathrm{cal}}=\{z_i\}_{i=1}^{n_{\mathrm{cal}}}$, define the empirical risk
$\widehat R(\lambda;\theta):=\frac{1}{n_{\mathrm{cal}}}\sum_{i=1}^{n_{\mathrm{cal}}} L^{\mathrm{RA}}(z_i;\lambda,\theta)$
and the count of failures $S(\lambda):=n_{\mathrm{cal}}\widehat R(\lambda;\theta)$.
By the i.i.d.\ assumption and the independence of CoRAP's sampling randomness across examples, for each fixed $\lambda$ we have
$S(\lambda)\sim \mathrm{Binom}(n_{\mathrm{cal}},R(\lambda;\theta))$.

For the target level $\alpha$, consider testing the one-sided null $H_{0,\lambda}: R(\lambda;\theta)>\alpha$.
We use the binomial-tail $p$-value
$p^{\mathrm{BT}}_\lambda := \Pr(\mathrm{Binom}(n_{\mathrm{cal}},\alpha)\le S(\lambda))$.
Under $H_{0,\lambda}$, this $p$-value is super-uniform:
for any $u\in[0,1]$, letting $c(u)$ be the largest integer with
$\Pr(\mathrm{Binom}(n_{\mathrm{cal}},\alpha)\le c(u))\le u$, we have
\begin{align}
\Pr(p^{\mathrm{BT}}_\lambda \le u)
&=\Pr(S(\lambda)\le c(u)) \\
&\le \Pr(\mathrm{Binom}(n_{\mathrm{cal}},\alpha)\le c(u))
\le u,\notag
\end{align}
where the inequality uses that the binomial CDF at a fixed count is non-increasing in the success probability, and
$R(\lambda;\theta)>\alpha$ under the null.

Let $\Lambda_{\mathrm{valid}}(\theta)$ be the set returned by any FWER-$\varepsilon$ procedure applied to
$\{p^{\mathrm{BT}}_\lambda\}_{\lambda\in\Lambda}$.
By FWER control and the super-uniformity above, with probability at least $1-\varepsilon$ over $\mathcal D_{\mathrm{cal}}$,
no $\lambda\in\Lambda_{\mathrm{valid}}(\theta)$ violates the target risk, i.e.,
$R(\lambda;\theta)\le \alpha$ for all $\lambda\in\Lambda_{\mathrm{valid}}(\theta)$.
In particular, this holds for any (possibly data-dependent) choice $\hat\lambda(\theta)\in\Lambda_{\mathrm{valid}}(\theta)$.

Finally, since $z_t$ is an independent draw from the same distribution,
$\mathbb E[L^{\mathrm{RA}}(z_t;\hat\lambda(\theta),\theta)] = R(\hat\lambda(\theta);\theta)$.
With probability at least $1-\varepsilon$ over $\mathcal D_{\mathrm{cal}}$, we have $R(\hat\lambda(\theta);\theta)\le \alpha$,
and hence
\begin{align}
\mathbb E[L^{\mathrm{RA}}(z_t;\hat\lambda(\theta),\theta)] \le \alpha.
\end{align}
The proof is complete.
\end{proof}

\begin{manualtheorem}{3.2}
\label{thm:two_level_exp_app}
Let $z_t$ be a test example that is i.i.d. with the calibration examples in $\mathcal{D}_{\mathrm{cal}}$, and let
$\mathcal H := \{\,L^{\mathrm{RA}}(z_t;\hat\lambda(\theta_0),\theta_0)=1\,\}$
denote the event that the base model $\theta_0$ fails on $z_t$.
Condition on the validity of the thresholds used in evaluating
$v_{\mathrm{ex}}$ and $v_{\mathrm{st}}$.
Assume that for a target level $\alpha \in (0,1)$ and error bounds $\xi_{\mathrm{ex}}, \xi_{\mathrm{st}} \ge 0$, the condition
$
v(\mathcal P\setminus U) \ge v(\mathcal P) -\sum_{u\in U}\phi_u - \xi
$
holds for $(v,\mathcal P,\xi)=(v_{\mathrm{ex}},\mathcal P_{\mathrm{ex}},\xi_{\mathrm{ex}})$
and
$(v,\mathcal P,\xi)=(v_{\mathrm{st}},\mathcal P_{\mathrm{st}},\xi_{\mathrm{st}})$.

\noindent\textbf{(i) (Example-level).}
With probability at least $1-\delta_{\mathrm{ex}}$ over the Shapley MC, the example subset $S^\star_{\mathrm{ex}}$
identified by the stopping condition $\sum_{u\in S^\star_{\mathrm{ex}}}\mathcal B^{\mathrm{ex}}_{u} \ge 1-\alpha$
is sufficient to achieve the risk control:
\begin{align}
\mathbb{E}[
L^{\mathrm{RA}}(z_t;\hat\lambda(\theta^{S^\star_{\mathrm{ex}}}),\theta^{S^\star_{\mathrm{ex}}})
\mid \mathcal H
]
\ \le\ \alpha+\xi_{\mathrm{ex}}.
\end{align}

\noindent\textbf{(ii) (Step-level).}
Conditionally on $S^\star_{\mathrm{ex}}$, with probability at least $1-\delta_{\mathrm{st}}$ over the step-level Shapley MC,
the step subset $T^\star$ identified by the stopping condition $\sum_{u\in T^\star}\mathcal B^{\mathrm{st}}_{u} \ge 1-\alpha$
is sufficient to achieve the risk control:
\begin{align}
\mathbb{E}[
L^{\mathrm{RA}}(z_t;\hat\lambda(\theta^{T^\star}),\theta^{T^\star})
\mid \mathcal H
]
\ \le\ \alpha+\xi_{\mathrm{st}}.
\end{align}

\end{manualtheorem}
\begin{proof}
We condition throughout on the event that the conformal thresholds $\hat\lambda(\theta^S)$ and $\hat\lambda(\theta^T)$
used to evaluate $v_{\mathrm{ex}}(\cdot; z_t)$ and $v_{\mathrm{st}}(\cdot; z_t)$ are valid for all coalitions considered.

We first prove the example-level statement.
Let $\phi^{\mathrm{ex}}_u$ be the Shapley value of the set function $v_{\mathrm{ex}}(\cdot; z_t)$ for each
$u\in\mathcal P_{\mathrm{ex}}$.
By the uniform lower confidence bound, with probability at least
$1-\delta_{\mathrm{ex}}$ over the Shapley Monte Carlo, all bounds hold simultaneously, i.e.,
$\phi^{\mathrm{ex}}_u \ge \mathcal B^{\mathrm{ex}}_u$ for all $u$.
On this event, the stopping rule implies
$\sum_{u\in S^\star_{\mathrm{ex}}}\phi^{\mathrm{ex}}_u \ge \sum_{u\in S^\star_{\mathrm{ex}}}\mathcal B^{\mathrm{ex}}_u \ge 1-\alpha$.
Let $U:=\mathcal P_{\mathrm{ex}}\setminus S^\star_{\mathrm{ex}}$.
By the inequality assumed in Theorem~\ref{thm:two_level_exp}, applied to $v_{\mathrm{ex}}(\cdot; z_t)$,
removing $U$ from the full set can decrease the value by at most $\sum_{u\in U}\phi^{\mathrm{ex}}_u+\xi_{\mathrm{ex}}$.
Using Shapley efficiency (the total Shapley mass equals $v_{\mathrm{ex}}(\mathcal P_{\mathrm{ex}};z_t)-v_{\mathrm{ex}}(\varnothing;z_t)$),
we obtain the lower bound
\begin{align}
\label{eq:proof_vex_lb}
v_{\mathrm{ex}}(S^\star_{\mathrm{ex}};z_t)
\!\ge\!
v_{\mathrm{ex}}(\varnothing;z_t)
+ \!\!\sum_{u\in S^\star_{\mathrm{ex}}}\phi^{\mathrm{ex}}_u
-\xi_{\mathrm{ex}} . 
\end{align}

\noindent On the miscoverage event $\mathcal H$, the empty coalition corresponds to the initial configuration, hence
$v_{\mathrm{ex}}(\varnothing;z_t)=1-L^{\mathrm{RA}}(z_t;\hat{\lambda}(\theta_0),\theta_0)=0$.
Combining this with $\sum_{u\in S^\star_{\mathrm{ex}}}\phi^{\mathrm{ex}}_u\ge 1-\alpha$ yields
$v_{\mathrm{ex}}(S^\star_{\mathrm{ex}};z_t)\ge 1-\alpha-\xi_{\mathrm{ex}}$ on $\mathcal H$.
By definition \eqref{eq:vex}, this implies
$L^{\mathrm{RA}}(z_t;\hat\lambda(\theta^{S^\star_{\mathrm{ex}}}),\theta^{S^\star_{\mathrm{ex}}})\le \alpha+\xi_{\mathrm{ex}}$ on $\mathcal H$,
and taking conditional expectation given $\mathcal H$ gives the desired bound
\begin{align}
\mathbb{E}[
L^{\mathrm{RA}}(z_t;\hat\lambda(\theta^{S^\star_{\mathrm{ex}}}),\theta^{S^\star_{\mathrm{ex}}})
\mid \mathcal H
]
\ \le\ \alpha+\xi_{\mathrm{ex}} .
\end{align}

\vspace{-1mm}
We next prove the step-level statement.
Condition on the realized $S^\star_{\mathrm{ex}}$ and consider the step universe
$\mathcal P_{\mathrm{st}}=\{(j,s): j\in S^\star_{\mathrm{ex}},\ s\in[L_j]\}$.
Let $\phi^{\mathrm{st}}_u$ be the Shapley value of the set function $v_{\mathrm{st}}(\cdot; z_t)$ for each
$u\in\mathcal P_{\mathrm{st}}$.
By the same uniform-LCB argument, with probability at least $1-\delta_{\mathrm{st}}$ over the step-level Shapley Monte Carlo,
we have $\phi^{\mathrm{st}}_u\ge \mathcal B^{\mathrm{st}}_u$ for all $u\in\mathcal P_{\mathrm{st}}$.
On this event, the stopping rule implies
$\sum_{u\in T^\star}\phi^{\mathrm{st}}_u \ge \sum_{u\in T^\star}\mathcal B^{\mathrm{st}}_u \ge 1-\alpha$.

\begin{figure*}[t] 
  \centering
  \begin{subfigure}[t]{0.25\textwidth}
    \centering
    \includegraphics[width=\linewidth]{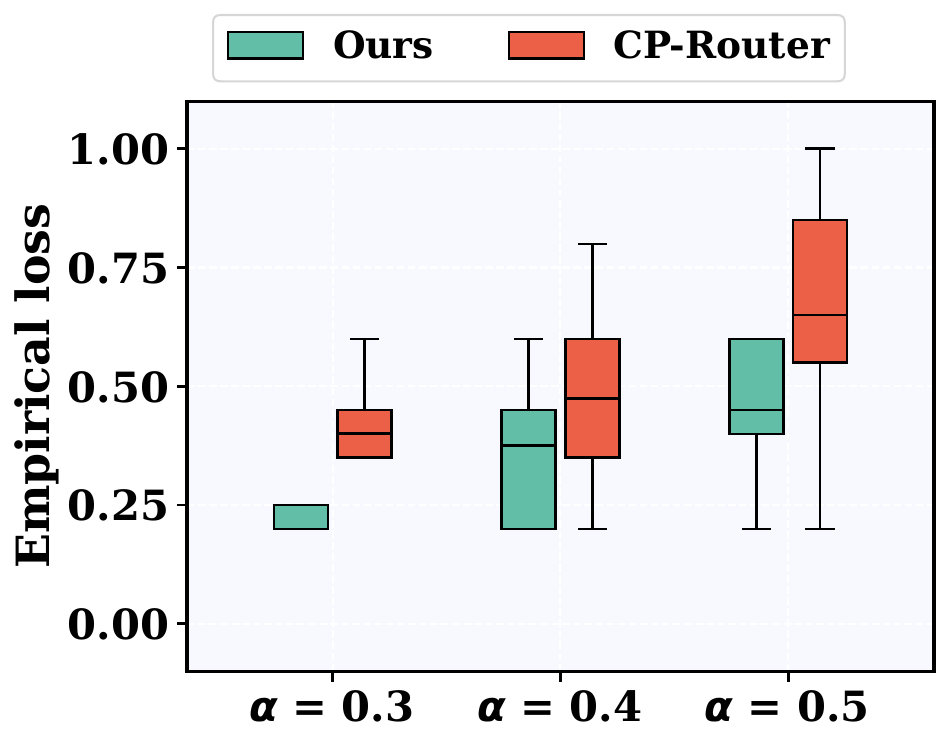}
    \caption{Validity on ScienceQA using LLaVA-CoT }
    \label{fig:cp_validity_llava-cot_scienceqa}
  \end{subfigure}\hfill
  \begin{subfigure}[t]{0.25\textwidth}
    \centering
    \includegraphics[width=\linewidth]{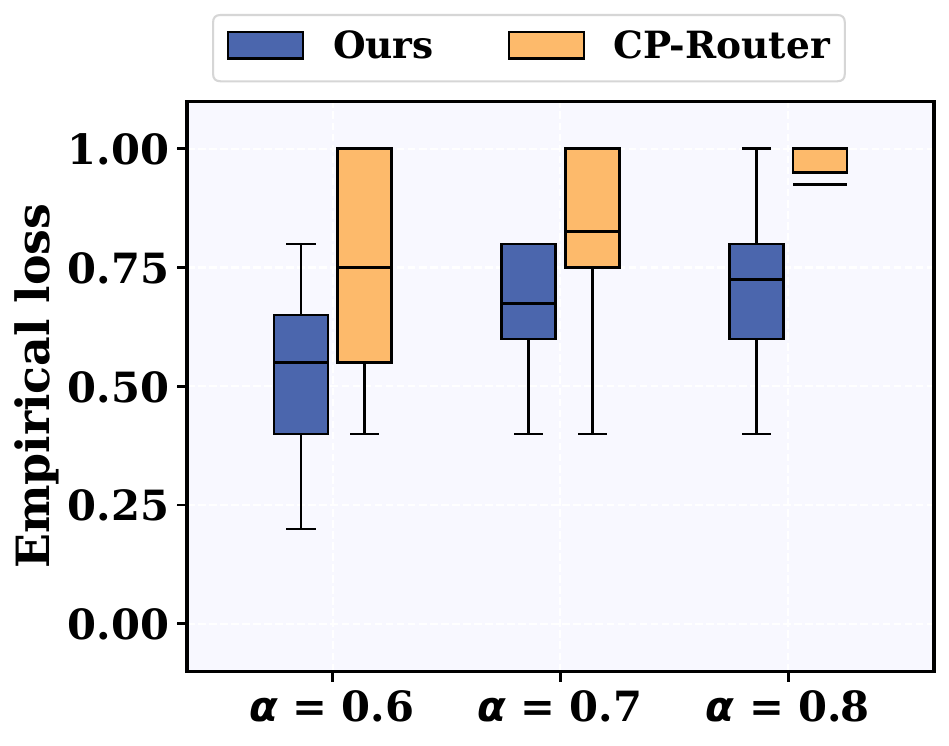}
    \caption{Validity on CLEVR-Math using R1-Onevision}
    \label{fig:cp_validity_r1-onevision_clevr-math}
  \end{subfigure}\hfill
    \begin{subfigure}[t]{0.232\textwidth}
    \centering
    \includegraphics[width=\linewidth]{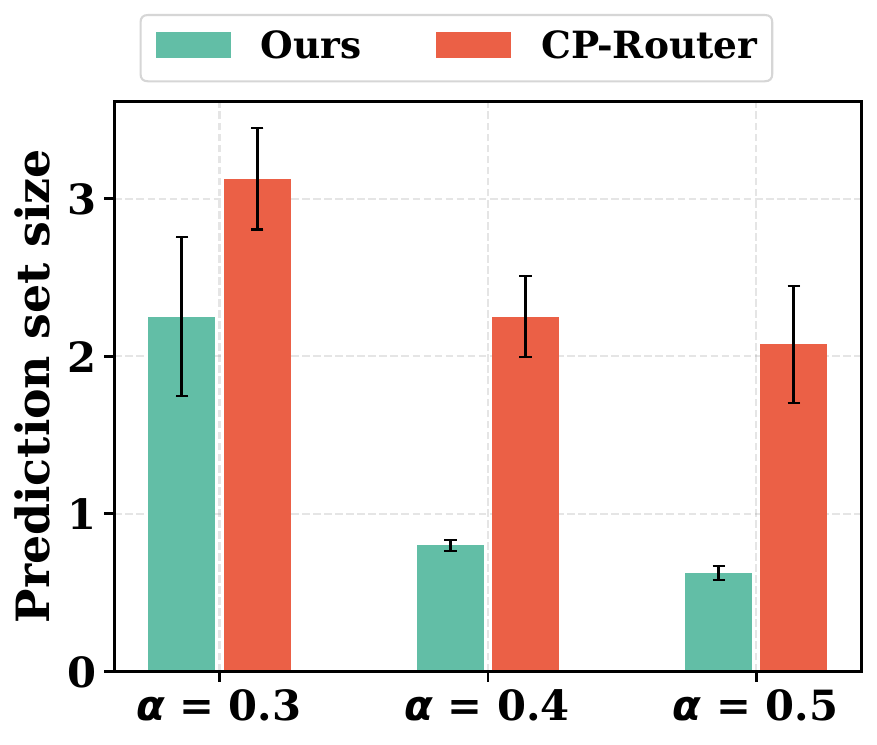}
    \caption{Efficiency on ScienceQA using LLaVA-CoT}
    \label{fig:cp_eff_llava-cot_scienceqa}
  \end{subfigure}\hfill
  \begin{subfigure}[t]{0.232\textwidth}
    \centering
    \includegraphics[width=\linewidth]{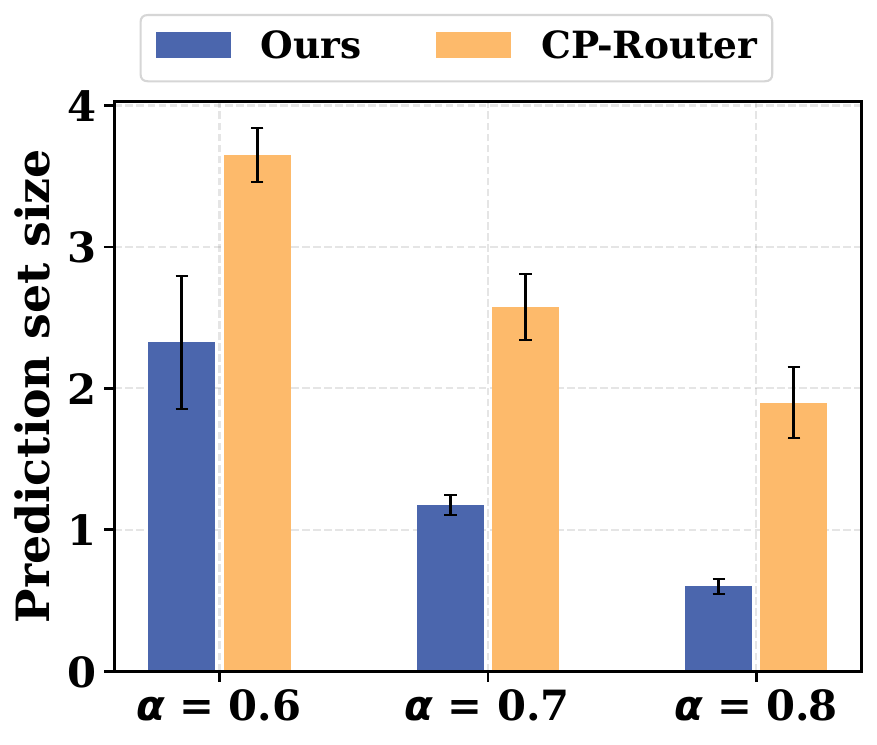}
    \caption{Efficiency on CLEVR-Math using R1-Onevision}
    \label{fig:cp_eff_r1-onevision_clevr-math}
  \end{subfigure}

  \vspace{-8pt}
  \caption{Empirical loss and efficiency results across datasets and models.}
  \label{fig:conformalresults_app}
  \vspace{-6mm}
\end{figure*}

Let $U:=\mathcal P_{\mathrm{st}}\setminus T^\star$.
By the same inequality in Theorem~\ref{thm:two_level_exp}, applied to $v_{\mathrm{st}}(\cdot; z_t)$,
removing $U$ from the full set can decrease the value by at most $\sum_{u\in U}\phi^{\mathrm{st}}_u+\xi_{\mathrm{st}}$.
Using Shapley efficiency gives
\begin{align}
\label{eq:proof_vst_lb}
v_{\mathrm{st}}(&T^\star;z_t)
\ge
v_{\mathrm{st}}(\varnothing;z_t)
+ \sum_{u\in T^\star}\phi^{\mathrm{st}}_u
 - \xi_{\mathrm{st}} .
\end{align}
\vspace{-2mm}

\noindent On $\mathcal H$, the empty step coalition corresponds to training on no selected steps and retains the initial failure,
so $v_{\mathrm{st}}(\varnothing;z_t)=1-L^{\mathrm{RA}}(z_t;\hat{\lambda}(\theta_0),\theta_0)=0$.
Therefore $v_{\mathrm{st}}(T^\star;z_t)\ge 1-\alpha-\xi_{\mathrm{st}}$ on $\mathcal H$.
By definition \eqref{eq:vst}, this implies
$L^{\mathrm{RA}}(z_t;\hat\lambda(\theta^{T^\star}),\theta^{T^\star})\le \alpha+\xi_{\mathrm{st}}$ on $\mathcal H$,
and taking conditional expectation given $\mathcal H$ yields
\vspace{-2mm}
\begin{align}
\mathbb{E}[
L^{\mathrm{RA}}(z_t;\hat\lambda(\theta^{T^\star}),\theta^{T^\star})
\mid \mathcal H
]
\ \le\ \alpha+\xi_{\mathrm{st}} .
\end{align}
The proof is complete.
\end{proof}
\vspace{-2mm}


\vspace{-2mm}
\section{More Experimental Details}
\vspace{-1.5mm}
\label{sec:app_exp_detail}
\textbf{Datasets.} We evaluate our method on two distinct benchmarks. First, CLEVR-Math is a mathematical dataset designed for visual arithmetic reasoning released under the CC BY 4.0 license. It consists of images depicting 3D geometric objects (such as spheres, cubes, and cylinders) paired with questions that require performing mathematical operations based on visual attributes. Second, ScienceQA is a large-scale multimodal benchmark covering natural, social, and language sciences distributed under the CC BY-NC-SA 4.0 license. It is publicly available and derived from standard educational materials. For our experiments, we specifically utilize the curated versions of these benchmarks from~\cite{tan2025reason}, which provide explicit Chain-of-Thought (CoT) annotations essential for LRMs. The utilized data comprises 35000 training samples for CLEVR-Math and 2112 training samples for ScienceQA. Given the synthetic nature of CLEVR-Math and the educational origin of ScienceQA, the data is free from personally identifying information and offensive material.

\textbf{Parameter settings.} In our experimental setup, we adopt 3B LMM-R1~\cite{peng2025lmm}, 7B R1-Onevision~\cite{yang2025r1}, and the 11B LLaVA-CoT~\cite{xu2025llava} models on CLEVR-Math. Instead of full fine-tuning, we approximate parameter updates on the language model head using influence functions with a step size $\eta$ of $5\times 10^{-6}$. On ScienceQA, we similarly adopt the LLaVA-CoT, LMM-R1, and R1-Onevision models. These models utilize identical influence function configurations, applying a step size $\eta$ of $5\times 10^{-6}$ to estimate the contribution of training factors. We set the default level $\varepsilon$ to 0.2. To ensure rigorous explanation guarantees, we perform 256 Monte Carlo permutations with a failure probability $\delta$ of 0.25. For the admission function $V(z_i,\hat{r}_k)$,  we admit a reasoning trace if its ROUGE-L~\cite{lin2004rouge} against the reference rationale is $\ge 0.2$ (chosen empirically).  $V$ can be replaced by other verifiers. We implement our models with PyTorch (v2.5.1) and HuggingFace Transformers (v4.46.1). LoRA fine-tuning uses PEFT (v0.12.0) and ROUGE evaluation uses the official ROUGE implementation.

\textbf{Machine configuration.} The experiments are implemented using the PyTorch framework and run on a cluster equipped with AMD 32-core 2.6GHz CPUs and Nvidia A100 40/80GB GPUs.

\begin{figure}[ht]
    \centering
    \begin{subfigure}[t]{0.48\linewidth}
        \centering
        \includegraphics[width=\linewidth]{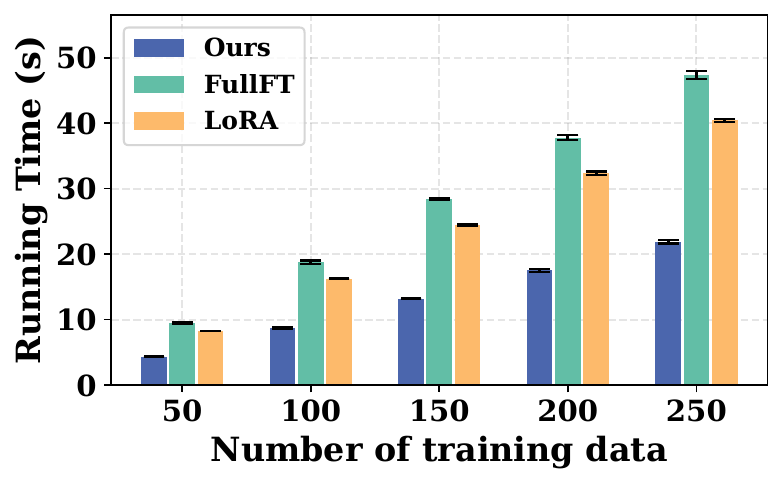}
        \caption{LMM-R1}
        \label{fig:ft_sci_lmmr1}
    \end{subfigure}
    \hfill
    \begin{subfigure}[t]{0.48\linewidth}
        \centering
        \includegraphics[width=\linewidth]{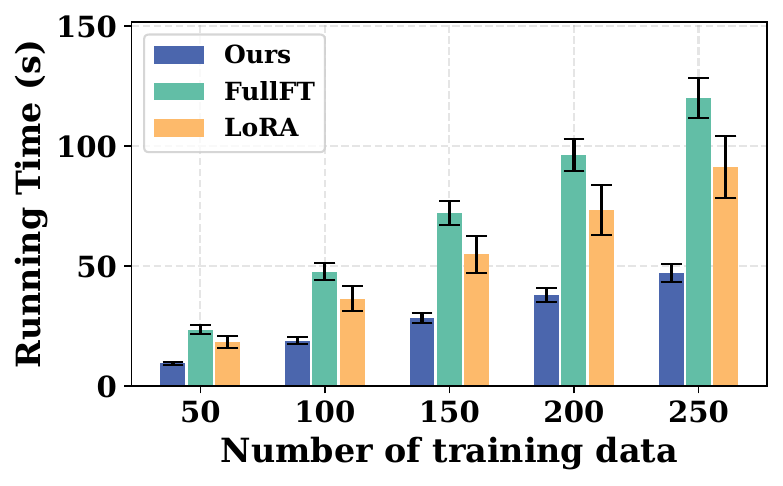} 
        \caption{LLaVA-CoT}
        \label{fig:ft_sci_llavacot}
    \end{subfigure}
    \vspace{-2mm}
    \caption{Influence of fine-tuning methods on ScienceQA.}
    \vspace{-4mm}
    \label{fig:ft_sci}
\end{figure}

\section{More Experimental Results}
\label{sec:app_exp_result}
\vspace{-1mm}
\textbf{Validity.} Figures~\ref{fig:cp_validity_llava-cot_scienceqa} and~\ref{fig:cp_validity_r1-onevision_clevr-math} empirically present the validity of our proposed framework. We employ R1-Onevision on the CLEVR-Math dataset with target significance levels $\alpha$ set to 0.6, 0.7, and 0.8, while for LLaVA-CoT on ScienceQA, we utilize levels of 0.3, 0.4, and 0.5. Across these diverse configurations, the empirical losses of our method consistently stay within the specified thresholds. This confirms that our approach strictly adheres to the risk control requirements defined in Eq.~\eqref{eq:risk_control} and fulfills the theoretical guarantees of Theorem~\ref{thm:CoRAP_ltt}. Conversely, the baseline CP-Router frequently exhibits empirical losses that violate the target $\alpha$ levels, signaling an inability to maintain reliable risk control in complex reasoning scenarios.

\textbf{Efficiency.} Beyond validity, we analyze the compactness of the constructed uncertainty sets in Figure~\ref{fig:cp_eff_llava-cot_scienceqa} and~\ref{fig:cp_eff_r1-onevision_clevr-math}. Under the same experimental settings, our framework demonstrates superior efficiency compared to CP-Router by generating significantly smaller sets. For instance, when applying R1-Onevision to CLEVR-Math at $\alpha=0.7$, our method yields a tight average set size of 1.175, implying that a minimal number of reasoning traces is sufficient to cover the true answer. In comparison, CP-Router produces much looser sets with an average size of 2.575 under identical conditions, resulting in higher redundancy. Overall, these results demonstrate that our CoRAP efficiently and effectively quantifies uncertainty by producing small and valid uncertainty sets.

\textbf{Ablation study.} In Fig.~\ref{fig:ft_sci} we compare the running time of our proposed method against full fine-tuning (FullFT) and LoRA finetuning on the ScienceQA dataset, utilizing both LMM-R1 and LLaVA-CoT models, with the baselines run for 3 epochs. The results reveal that our method achieves a significantly lower computational cost when adopting the influence function methods. As the number of training data increases, the running time required by ours is substantially less than that of both FullFT and LoRA for both architectures, demonstrating its superior efficiency over these traditional techniques across different LRMs.

\vspace{-1mm}
\section{Background}
\label{sec:bg}
\vspace{-1.5mm}

\paragraph{Large reasoning models (LRMs).}
We consider a supervised reasoning task using a dataset $\mathcal{D}_{\mathrm{tr}}=\{(x_j,q_j,r_j,y_j)\}_{j=1}^{n_{\mathrm{tr}}}$, where each instance consists of an input image $x_j$, a query $q_j$, a reasoning sequence $r_j=[r_{j,1},\ldots,r_{j,L_j}]$, and a final answer $y_j$. We define the universe of all reasoning steps across training examples as $\mathcal{U}_{\mathrm{st}}=\{(j,s): j\in[n_\mathrm{tr}],\ s\in[L_j]\}$. An LRM parameterizes a generation policy $\pi_\theta$ that models the joint probability of reasoning and answer tokens conditioned on the input. Representing each sample as a single autoregressive sequence $a_j=[r_{j,1},\ldots,r_{j,L_j},y_j]$, the supervised fine-tuning (SFT) objective maximizes the likelihood of generating the full sequence:
\vspace{-2mm}
\begin{align}
\label{eq:sft}
&\mathcal{L}_{\text{SFT}} = \\
&- \mathbb{E}_{(x,q,r,y)\sim \mathcal{D}_{\mathrm{tr}}} 
\sum_{s=1}^{|a_j|}
\log \pi_{\theta}(a_{j,s}\mid x_j, q_j. a_{j,<s}). \notag
\end{align}

\vspace{-2mm}
\noindent This objective provides the training setup used throughout the paper.

\paragraph{Split conformal prediction.}
Given a model $\theta$, split conformal prediction (CP) provides reliability guarantees by outputting a prediction set $\mathcal{C}_\alpha$, rather than just one answer, which is guaranteed to contain the true answer $y$ with a high and user-specified probability (e.g., $1-\alpha$). This method works for any black-box model $\theta$ by using a separate calibration set $\mathcal{D}_{\mathrm{cal}}=\{(x_i, q_i, r_i, y_i)\}_{i=1}^{n_\mathrm{cal}}$. The key component of split CP is a nonconformity score function $\mathcal{S}((x, q), y;\theta)$. This function measures how unusual or atypical a candidate answer $y$ is for a given input $(x, q)$ according to the model. A high score means the pair $((x, q), y)$ is non-conforming (i.e., the model finds it unlikely), while a low score means it is conforming. To provide its statistical guarantee, split CP requires that the calibration data $\mathcal{D}_{\mathrm{cal}}$ and test data are exchangeable.

To determine the conformal prediction set for a test input $(x_{t}, q_{t})$, we test the nonconformity score for each potential answer $y \in \mathcal{Y}$ against a pre-defined significance level $\alpha \in (0,1)$. The goal is to construct a prediction set $\mathcal{C}_\alpha$ that satisfies the marginal coverage guarantee $\mathbb{P}(y_{t} \in \mathcal{C}_{\alpha} (x_{t}, q_{t}; \theta)) \ge 1 - \alpha$, where $(x_{t}, q_{t}, r_{t}, y_{t})$ is a test point exchangeable with the calibration data.

\begin{theorem}[\cite{vovk2005algorithmic}] \label{thm:CptSetThm}
Assume that examples $z_i=(x_i, q_i, r_i, y_i)$ for $i=1,\cdots,n_\mathrm{cal}$ and $z_t$ are exchangeable. For any nonconformity measure $\mathcal{S}$ and target risk $\alpha \in (0,1)$, define the conformal set at $(x_{t}, q_{t})$ as $\mathcal{C}_{\alpha} (x_{t}, q_{t}; \theta) = \{y \in \mathcal{Y}: \mathcal{S}((x_{t}, q_{t}), y;\theta) \leq \hat{q} \}$, where $\hat{q} = \text{Quantile}(1-\alpha; \{\mathcal{S}((x_i, q_i), y_i;\theta)\}_{i=1}^{n_\mathrm{cal}} \cup \{\infty\})$. Then $\mathcal{C}_{\alpha} (x_{t}, q_{t}; \theta)$ satisfies 
\vspace{-2mm}
\begin{align}
    \mathbb{P}(y_{t} \in \mathcal{C}_{\alpha} (x_{t}, q_{t}; \theta)) \ge 1 - \alpha.
\end{align}
\end{theorem}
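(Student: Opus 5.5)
The statement is the classical split conformal coverage result (Vovk et al.), so I will give the standard exchangeability/rank argument. Write $s_i := \mathcal{S}((x_i,q_i),y_i;\theta)$ for $i=1,\dots,n_{\mathrm{cal}}$ and $s_t := \mathcal{S}((x_t,q_t),y_t;\theta)$, with $n:=n_{\mathrm{cal}}$. Since $\theta$ is fixed (trained on data independent of $\mathcal D_{\mathrm{cal}}$ and $z_t$), the scores are a fixed measurable function applied to each example. Hence exchangeability of $(z_1,\dots,z_n,z_t)$ transfers to exchangeability of $(s_1,\dots,s_n,s_t)$. By construction of $\mathcal{C}_\alpha$, the event $y_t\in\mathcal{C}_\alpha(x_t,q_t;\theta)$ is exactly the event $s_t\le\hat q$. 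So the task reduces to a purely combinatorial statement about the rank of $s_t$ among $n+1$ exchangeable scores.

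First I will rewrite $\hat q$ as an order statistic. With the convention that the $(1-\alpha)$-quantile of the multiset $\{s_1,\dots,s_n,\infty\}$ is its $\lceil (1-\alpha)(n+1)\rceil$-th smallest element, we have $\hat q = s_{(k)}$ with $k=\lceil (1-\alpha)(n+1)\rceil$. Here $s_{(k)}=\infty$ whenever $k=n+1$, in which case coverage is trivial. Next I will use a replacement step: $s_t\le s_{(k)}$ (the order statistic of the calibration scores together with $\infty$) holds if and only if $s_t$ is among the $k$ smallest of $\{s_1,\dots,s_n,s_t\}$. The reason is that replacing $\infty$ by $s_t$ does not change whether $s_t$ sits at or below the $k$-th position.

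By exchangeability, the rank of $s_t$ among the $n+1$ scores is uniform on $\{1,\dots,n+1\}$ when there are no ties. Ties only help, if we break them uniformly at random, or count a tie as coverage since the inequality is $\le$. Therefore
\[
\mathbb{P}(s_t\le\hat q)\ \ge\ \frac{k}{n+1}\ \ge\ 1-\alpha,
\]
which is the claim.

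The main obstacle is the tie-handling and quantile-convention bookkeeping in the replacement step, namely making precise that $\mathbb{P}(\mathrm{rank}(s_t)\le k)\ge k/(n+1)$ when ties are present. I would handle it by symmetry: the events $\{s_j\le s_{(k)}^{\text{all}}\}$ have equal probability for all $n+1$ indices by exchangeability. At least $k$ of these events occur on every realization, so summing the indicators gives $(n+1)\,\mathbb{P}(s_t\le s^{\text{all}}_{(k)})\ge k$. This avoids any continuity assumption.
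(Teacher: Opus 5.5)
Your proof is correct and complete. The replacement step holds because both events reduce to $|\{i: s_i < s_t\}| \le k-1$. The symmetric counting $\sum_j \mathbf{1}\{s_j \le s^{\text{all}}_{(k)}\} \ge k$ gives $\mathbb{P}(s_t \le \hat q) \ge k/(n+1) \ge 1-\alpha$ with ties handled and no continuity assumption.

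The paper gives no proof of this background result; it only cites Vovk et al. Your rank/exchangeability argument is the standard proof behind that citation. You also make the convention $\hat q = s_{(\lceil (1-\alpha)(n+1)\rceil)}$ explicit, which the paper's unspecified $\mathrm{Quantile}$ implicitly requires.
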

\vspace{-2mm}

\noindent In essence, this theorem provides the formal guarantee that the constructed prediction set $\mathcal{C}_{\alpha} (x_{t}, q_{t}; \theta)$ will fail to cover the true answer $y_{t}$ with a probability of at most $\alpha$, provided the calibration and test data are exchangeable.

\vspace{-1mm}
\section{AI Assistance in Writing}
\vspace{-1mm}
During the preparation of this manuscript, we utilized AI tools (e.g., GPT-5.2, Gemini 3 Pro) exclusively for copy-editing purposes, including grammatical correction, phrasing refinement, and readability improvement. These tools were not employed to generate scientific concepts, conduct analyses, or formulate conclusions, nor did they influence the study’s methodology or results. All substantive content was developed solely by the human authors, who retain full responsibility for the final text and editorial decisions.

\end{document}